\documentclass{article}





\usepackage[final, nonatbib]{neurips_2020}
\usepackage[colorinlistoftodos]{todonotes}
\usepackage{caption}
\usepackage{wrapfig}
\usepackage[utf8]{inputenc} 
\usepackage{comment}
\usepackage[T1]{fontenc}    
\usepackage{hyperref}       
\usepackage{url}            
\usepackage{booktabs}       
\usepackage{amsfonts}       
\usepackage{nicefrac}       
\usepackage{microtype}      
\usepackage{todonotes}
\usepackage{longtable}
\usepackage[ruled,vlined]{algorithm2e}
\usepackage{amsmath}
\usepackage{kky}
\usepackage{subcaption}
\usepackage{afterpage}
\usepackage{float}
\usepackage{ bbold }
\graphicspath{{./figures/}}

\definecolor{darkspringgreen}{rgb}{0.09, 0.45, 0.27}

\title{Modeling Shared Responses in Neuroimaging Studies through MultiView ICA}

%

\author{%
  Hugo Richard\footnotemark[1] \\
  Inria, Université Paris-Saclay\\
  Saclay, France \\
  \texttt{hugo.richard@inria.fr} \\
  \And
  Luigi Gresele\thanks{Equal contribution} \\
  MPI for Intelligent Systems,\\
  MPI for Biological Cybernetics, Tübingen, Germany \\
  \texttt{luigi.gresele@tuebingen.mpg.de} \\
  \And
  Aapo Hyv{\"a}rinen \\
  Inria, Université Paris-Saclay, Saclay, France\\
  Department of Computer Science HIIT, University of Helsinki, Finland \\
  \texttt{aapo.hyvarinen@helsinki.fi} \\
   \And
  Bertrand Thirion \\
  Inria, Université Paris-Saclay\\
  Saclay, France \\
  \texttt{bertrand.thirion@inria.fr} \\
\And
  Alexandre Gramfort \\
  Inria, Université Paris-Saclay\\
  Saclay, France \\
  \texttt{alexandre.gramfort@inria.fr} \\
 \And
 Pierre Ablin \\
 Département de Mathématiques et Applications\\
 Ecole Normale Supérieure \\
 Paris, France \\
  \texttt{pierre.ablin@ens.fr} \\
}

\begin{document}

\maketitle

\begin{abstract}
  Group studies involving large cohorts of subjects are important to draw general conclusions about brain functional organization.
  However, the aggregation of data coming from multiple subjects is challenging, since it requires accounting for large variability in anatomy, functional topography and stimulus response across individuals.
  Data modeling is especially hard for ecologically relevant  conditions such as  movie watching, where the experimental setup does not imply well-defined cognitive operations.
  We propose a novel MultiView Independent Component Analysis (ICA) model for group studies, where data from each subject are modeled as a linear combination of shared independent sources plus noise.
  Contrary to most group-ICA procedures, the likelihood of the model is available in closed form.
  We develop an alternate quasi-Newton method for maximizing the likelihood, which is robust and converges quickly.
  We demonstrate the usefulness of our approach first on fMRI data, where our model demonstrates improved sensitivity in identifying common sources among subjects.
  Moreover, the sources recovered by our model exhibit lower between-session variability than other methods.
  On magnetoencephalography (MEG) data, our method yields more accurate source localization on phantom data.
  Applied on $200$ subjects from the Cam-CAN dataset it reveals a clear sequence of evoked activity in sensor and source space.
\end{abstract}
\section{Introduction}
\label{sec:intro}
The past decade has seen the emergence of two trends in neuroimaging: the collection of massive neuroimaging datasets, containing data from hundreds of participants~\cite{taylor2017cambridge,van2013wu,sudlow2015uk}, and the use of naturalistic stimuli to move closer to a real life experience with dynamic and multimodal stimuli~\cite{Sonkusare-etal:2019}.
%
%
Large scale datasets provide an unprecedented opportunity to assess the generality and validity of neuroscientific findings across subjects, with the potential of offering novel insights on human brain function and useful medical biomarkers.
%
%
However, when using ecological conditions, such as movie watching or simulated driving, 
stimulations are difficult to quantify. Consequently the statistical analysis of the data using
supervised regression-based approaches is difficult.
This has motivated the use of unsupervised learning methods that leverage the availability of
data from multiple subjects performing the same experiment; analysis on such large groups boosts statistical
power.

Independent component analysis~\cite{hyvarinen2000independent} (ICA) is a widely used unsupervised method for neuroimaging studies. It is routinely applied on individual subject electroencephalography (EEG)~\cite{makeig1996independent}, magnetoencephalography (MEG)~\cite{vigario1998independent} or functional MRI (fMRI)~\cite{mckeown1998independent} data. 
ICA models a set of signals as the product of a \emph{mixing matrix} and a \emph{source} matrix containing independent components.
The identifiability theory of ICA states that having non-Gaussian independent sources is a strong enough condition to recover the model parameters~\cite{comon1994independent}.
ICA therefore does not make assumptions about what triggers brain activations in the stimuli, unlike confirmatory approaches like the general linear model \cite{friston1994statistical, poline2012general}.
This explains why, in fMRI processing, it is a model of choice when analysing
resting state data \cite{beckmann2005investigations} or when subjects are
exposed to natural~\cite{malinen2007towards}~\cite{bartels2005brain} or complex stimuli such as simulated driving \cite{calhoun2002different}.
In M/EEG processing, it is widely used to isolate acquisitions artifacts from neural signal~\cite{jung1998extended}, and to identify brain sources of interest~\cite{vigario2000independent, delorme2012independent}.

%
%
However, unlike with univariate methods, statistical inference about multiple subjects using ICA is not straightforward: so-called group-ICA is the topic of various studies~\cite{hyvarinen2013independent}.
Several works assume that the subjects share a common mixing matrix, but with different sources~\cite{pfister2019robustifying}~\cite{svensen2002ica}.
Instead, we focus on a model where the subjects share a common sources matrix, but have different mixing matrices.
When the subjects are exposed to the same stimuli, the common source matrix corresponds to the group \emph{shared responses}.
%
Most methods proposed in this framework proceed in two steps~\cite{calhoun2009review, huster2015group}.
First, the data of individual subjects are aggregated into a single dataset, often resorting to dimension reduction techniques like Principal Component Analysis (PCA).
Then, off-the-shelf ICA is applied on the aggregated dataset.
This popular method has the advantage of being simple and
straightforward to implement since it resorts to customary single-subject
ICA method.
However, it is not grounded in a principled probabilistic model of the problem, and does not have strong statistical guarantees like asymptotic efficiency. 
%
%
%

We propose a novel group ICA method called \emph{MultiView ICA}.
It models each subject's dataset as a linear combination of a common
sources matrix with additive Gaussian noise.
Importantly, we consider that the noise is on the sources and not on
the sensors.
This greatly simplifies the likelihood of the model which can even be
written in closed-form.
%
Despite its simplicity, our model allows for an expressive representation of inter-subject variability through subject-specific functional topographies (mixing matrices) and variability in the individual response (with noise in the source domain).
%
%
To the best of our knowledge, this is the first time that such a tractable likelihood is proposed for multi-subject ICA.
The likelihood formulation shares similarities with the usual ICA likelihood, which allows us to develop a fast and robust alternate quasi-Newton method for its maximization.

\textbf{Contribution}
In section~\ref{sec:mvica}, we introduce the MultiView ICA model, and show that it is identifiable. We then write its likelihood in closed form, and maximize it using an alternate quasi-Newton method.
We also provide a sensitivity analysis for MultiView ICA, and show that the choice of the noise parameter in the algorithm has little influence on the output.
In section~\ref{sec:rel_work}, we compare our approach to other group ICA methods.
Finally, in section~\ref{sec:expts}, we empirically verify through extensive experiments on fMRI and MEG data that it improves source identification with respect to competing methods, suggesting that the expressiveness and robustness of our model make it a useful tool for multivariate neural signal analysis.

\section{Multiview ICA for Shared response modelling}
\label{sec:mvica}
\textbf{Notation} The absolute value of the determinant of a matrix $W$ is $|W|$.
The $\ell_2$ norm of a vector $\sbb$ is $\|\sbb\|$.
For a scalar valued function $f$ and a vector $\sbb \in \bbR^k$, we write $f(\sbb) = \sum_{j=1}^kf(s_j)$ and denote $f'$ the gradient of $f$.
\emph{All proofs are deferred to appendix~\ref{sec:app_proofs}}.
\subsection{Model, likelihood and approximation}
Given $m$ subjects, we model the data $\xb^i\in\bbR^k$ of subject $i$ as
\begin{equation}
\label{eq:ica_model}
\boxed{
    \xb^i = A^i(\sbb + \nb^i), \enspace i=1,\dots, m
    }
\end{equation}
where $\sbb = [s_1, \dots, s_k]^{\top} \in \bbR^k$ are the shared independent sources, $\nb^i \in \bbR^k$ is individual noise, $A^i \in \bbR^{k\times k}$ are the individual mixing matrices, assumed to be full-rank.
We assume that samples are observed i.i.d. For simplicity, we assume that the sources share the same density $d$, so that the independence assumption is $p(\sbb) = \prod_{j=1}^k d(s_j)$. Finally, we assume that the noise is Gaussian decorrelated of variance $\sigma^2$, $\nb^i \sim \mathcal{N}(0, \sigma^2I_k)$, and that the noise is independent across subjects and independent from the sources.
The assumption of additive white noise on the sources models individual deviations from the shared sources $\sbb$.
It is equivalent to having noise on the sensors with covariance $\sigma^2 A^i \left(A^i\right)^{\top}$, i.e. a scaled version of the data covariance without noise.

Since the sources are shared by the subjects, there are many more observed variables than sources in the model: there are $k$ sources, while there are $k \times m$ observations.
Therefore, model~\eqref{eq:ica_model} can be seen as an instance of \emph{undercomplete} ICA.
The goal of multiview ICA is to recover the mixing matrices $A^i$ from observations of the $\xb^i$.
The following proposition extends the standard idenfitiability theory of ICA~\cite{comon1994independent} to multiview ICA, and shows that recovering the sources/mixing matrices is a well-posed problem up to scale and permutation.
\begin{proposition}[Identifiability of MultiView ICA]
\label{prop:identifiability}
Consider $\xb^i, \enspace i=1\dots m,$ generated from~\eqref{eq:ica_model}. Assume that $\xb^i = A'^i(\sbb' + \nb'^i)$ for some invertible matrices $A'^i\in \bbR^{k\times k}$, independent non-Gaussian sources $\sbb'\in \bbR^k$ and Gaussian noise $\nb'^i$. Then, there exists a scale and permutation matrix $P\in \bbR^{k\times k}$ such that for all $i$, $A'^i = A^i P$.
\end{proposition}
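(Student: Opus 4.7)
The plan is to first exploit the multi-view structure to force the alternative mixings to differ from the originals by a common matrix $W$, and then to invoke a Comon-style single-view argument, adapted for additive Gaussian noise, to conclude that $W$ is a scale-permutation. Set $W^i := (A'^i)^{-1} A^i$. Applying $(A'^i)^{-1}$ to the two representations of $\xb^i$ shows that across all $i$ jointly,
\[
(\sbb' + \nb'^i)_{i=1}^m \;\stackrel{d}{=}\; (W^i(\sbb + \nb^i))_{i=1}^m.
\]

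\textbf{Step 1 (collapsing all $W^i$ to a common $W$).} For any $i \neq j$, differencing the $i$-th and $j$-th components gives
\[
\nb'^i - \nb'^j \;\stackrel{d}{=}\; (W^i - W^j)\sbb + (W^i\nb^i - W^j\nb^j).
\]
The left-hand side is Gaussian. On the right, $\sbb$ is independent of $(\nb^i,\nb^j)$, so the two summands are independent, and the second is Gaussian. Cramér's decomposition theorem therefore forces $(W^i-W^j)\sbb$ to be Gaussian. Each of its coordinates is a linear combination of the independent non-Gaussian sources $s_l$; applying Cramér's theorem coordinate-wise, every coefficient must vanish, so $W^i = W^j$ for all $i,j$. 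Denote the common value $W$.

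\textbf{Step 2 (Comon-style identification of $W$).} With $W^i = W$, we have $\sbb' + \nb'^i \stackrel{d}{=} W\sbb + W\nb^i$. Taking logarithms of characteristic functions and using that both noises are centered Gaussian yields, near $t=0$,
\[
\sum_j \log \phi_{s'_j}(t_j) - \sum_j \log \phi_{s_j}\bigl((W^\top t)_j\bigr) = Q(t),
\]
with $Q$ a quadratic form in $t$. Differentiating by $\partial^2/\partial t_p \partial t_q$ for $p \neq q$ annihilates the separable first sum and reduces $Q$ to a constant $\gamma_{pq}$, leaving
\[
\sum_j W_{pj} W_{qj}\, g_j''\bigl((W^\top t)_j\bigr) = -\gamma_{pq}, \quad g_j := \log \phi_{s_j},
\]
for all $t$. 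Changing variables $u = W^\top t$ (possible since $W$ is invertible), each term depends only on $u_j$, so constancy in $u$ forces, for every $j$, either $W_{pj}W_{qj}=0$ or $g_j''$ constant. The latter would make $s_j$ Gaussian, excluded by hypothesis, so $W_{pj}W_{qj}=0$ for all $p \neq q$ and all $j$. Hence each column of $W$ has at most one nonzero entry, and invertibility makes $W$ a scale-permutation $P'$. Then $A'^i = A^i W^{-1} = A^i P$ with $P := P'^{-1}$ (also scale-permutation).

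\textbf{Main obstacle.} The genuinely multi-view step is Step 1: collapsing $W^i = W^j$ crucially uses $m\geq 2$ and the independence of $\sbb$ from the noises, combined with two layers of Cramér's theorem (first to peel the Gaussian term off the sum, then coordinate-wise on linear combinations of non-Gaussian sources). Once the $W^i$ collapse to a single $W$, Step 2 is essentially Comon's classical ICA identifiability argument, the only novelty being that additive Gaussian noise contributes a harmless quadratic $Q$ that is differentiated away.
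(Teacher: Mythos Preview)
Your proof is correct and uses the same two ingredients as the paper---Comon's single-view identifiability and a Cram\'er-based differencing argument---but applies them in the opposite order. The paper first applies Comon's theorem to each subject separately (noting that $\sbb + \nb^i$ has independent non-Gaussian components, so $(A'^i)^{-1}A^i$ is already a scale-permutation $P^i$), and only then differences two subjects to obtain $(P^1-P^i)\sbb' = \text{Gaussian}$ as an almost-sure equality, forcing $P^1=P^i$. You instead difference first to collapse all $W^i$ to a common $W$, and then run a characteristic-function argument to show $W$ is a scale-permutation. Your order makes the genuinely multi-view step self-contained, at the cost of an extra Cram\'er invocation (because you work with $\stackrel{d}{=}$ where an a.s.\ equality is in fact available). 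Your Step~2 could also be shortened to a one-line citation of Comon: since each $s_j+n^i_j$ is non-Gaussian (Cram\'er again: if it were Gaussian, $s_j$ would be), and $\sbb'+\nb'^i = W(\sbb+\nb^i)$ with both sides having independent non-Gaussian components, Comon's theorem yields that $W$ is a scale-permutation directly, avoiding the second-derivative computation and its attendant regularity issues (existence of second moments, and the passage from ``$g_j''$ constant near $0$'' to ``$s_j$ Gaussian,'' which strictly speaking requires Marcinkiewicz or Darmois--Skitovich rather than the local argument you sketch).
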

%
%
%
%
%
%
%
We propose a maximum-likelihood approach to estimate the mixing matrices. 
We denote by $W^i = (A^i)^{-1}$ the unmixing matrices, and view the likelihood as a function of $W^i$ rather than $A^i$. As shown in Appendix~\ref{sec:appendix:likelihood_transform}, the negative log-likelihood can be written by integrating over the sources
\begin{equation} 
    \label{eq:likelihood}
    \loss(W^1, \dots, W^m) = -\sum_{i=1}^m\log|W^i| - \log\left(\int_{\sbb}\exp\left(-\frac1{2\sigma^2}\sum_{i=1}^m\|W^i\xb^i - \sbb\|^2\right)p(\sbb)d\sbb\right),
\end{equation}
up to additive constants.
%
Since this integral factorizes, i.e.\ the integrand is a product of functions of $s_j$, we can perform the integration as shown in Appendix~\ref{sec:appendix:integration}. We define a smoothened version of the logarithm of the source density $d$ by convolution with a Gaussian kernel as
$
    f(s)= \log \left(\int \exp(-\frac{m}{2\sigma^2} z^2) d(s-z) dz\right)
$
and $\tilde{\sbb} = \frac1m\sum_{i=1}^m W^i\xb^i$ the source estimate.
The negative log-likelihood becomes
\begin{equation}
    \label{eq:cost_function}
    \loss(W^1, \dots, W^m) = -\sum_{i=1}^m \log|W^i| + \frac1{2\sigma^2}\sum_{i=1}^m\|W^i\xb^i - \tilde{\sbb}\|^2 + f(\tilde{\sbb}).
\end{equation}
Multiview ICA is then performed by minimizing $\loss$, and the estimated shared sources are $\tilde{\sbb}$.
The negative log-likelihood $\loss$ is quite simple, and importantly, can be computed easily given the parameters of the model and the data; it does not involve any intractable integral.

For one subject ($m=1$), $\loss(W^1)$ simplifies to the negative log-likelihood of ICA and we recover Infomax~\cite{bell1995information,cardoso1997infomax}, where the source log-pdf is replaced with the smoothened $f$.
%
\subsection{Alternate quasi-Newton method for MultiView ICA}
The parameters of the model are estimated by minimizing $\loss$.
We propose a combination of quasi-Newton method and alternate minimization for this task.
First, $\mathcal{L}$ is non-convex: it is only defined when the $W^i$ are invertible, which is a non-convex set.
Therefore, we only look for local minima as usual in ICA.
We propose an alternate minimization scheme, where $\loss$ is alternatively diminished with respect to each $W^i$. 
When all matrices $W^1, \dots, W^m$ are fixed but one, $W^i$, $\loss$ can be rewritten, up to an additive constant 
\begin{equation}
    \label{eq:indiv_loss}
    \loss^i(W^i) = -\log|W^i| + \frac{1 - 1/m}{2\sigma^2}\|W^i\xb^i - \frac{m}{m-1}\tilde{\sbb}^{-i}\|^2 + f(\frac1m W^i \xb^i +\tilde{\sbb}^{-i}), 
\end{equation}
with $\tilde{\sbb}^{-i} = \frac1m \sum_{j \neq i}W^j \xb^j$.
This function has the same structure as the usual maximum-likelihood ICA cost function: it is written $\loss^i(W^i) = -\log|W^i| + g(W^i\xb^i)$, where $g(\yb) = \sum_{j=1}^kf(\frac{y_j}{m} + \tilde{\sbb}^{-i}_j) + \frac{1 - 1/m}{2\sigma^2}(y_j - \frac{m}{m-1}\tilde{\sbb}^{-i}_j)^2$.
Fast quasi-Newton algorithms ~\cite{zibulevsky2003blind, ablin2018faster} have been proposed for minimizing such functions.
We employ a similar technique as~\cite{zibulevsky2003blind}, which we now describe.

Quasi-Newton methods are based on approximations of the Hessian of $\loss^i$.
The relative gradient (resp. Hessian)~\cite{amari1996new, cardoso1996equivariant} of $\loss^i$ is defined as the matrix $G^i\in \bbR^{k \times k}$ (resp. tensor $\mathcal{H}^i \in \bbR^{k\times k\times k\times k}$) such that as the matrix $E\in\bbR^{k\times k}$ goes to $0$, we have $\loss^i((I_k + E)W^i) \simeq \loss^i(W^i) + \langle G^i, W^i\rangle + \frac12\langle E, \mathcal{H}^iE\rangle$. Standard manipulations yield:
\begin{equation}
    \label{eq:gradient}
    G^i = \frac1mf'(\tilde{\sbb})(\yb^i)^{\top} + \frac{1 - 1 /m}{\sigma^2}(\yb^i - \frac{m}{m-1}\tilde{\sbb}^{-i})(\yb^i)^{\top} - I_k, \text{ where } \yb^i= W^i\xb^i
\end{equation}
\begin{equation}
    \label{eq:hessian}
    \mathcal{H}^i_{abcd} = \delta_{ad}\delta_{bc} + \delta_{ac}\left(\frac{1}{m^2}f''(\tilde{\sbb}_a) + \frac{1 - 1/m}{\sigma^2}\right)\yb^i_{b}\yb^i_d,\enspace \text{for }a, b, c, d =1\dots k
\end{equation}

Newton's direction is then $-\left(\mathcal{H}^i\right)^{-1}G^i$. However, this Hessian is costly to compute (it has $\simeq k^3$ non-zero coefficients) and invert (it can be seen as a big $k ^2\times k^2$ matrix). Furthermore, to enforce that Newton's direction is a descent direction, the Hessian matrix should be regularized in order to eliminate its negative eigenvalues~\cite{nocedal2006numerical}, and $\mathcal{H}^i$ is not guaranteed to be positive definite.
These obstacles render the computation of Newton's direction impractical.
Luckily, if we assume that the signals in $\yb^i$ are independent, severall coefficients cancel, and the Hessian simplifies to the approximation
\begin{equation}
    \label{eq:hessian_approx}
    H^i_{abcd} = \delta_{ad}\delta_{bc} + \delta_{ac}\delta_{bd}\Gamma^i_{ab}\enspace \text{with  }\Gamma^i_{ab} = \left(\frac{1}{m^2}f''(\tilde{\sbb}_a) + \frac{1 - 1/m}{\sigma^2}\right)\left(\yb^i_{b}\right)^2.
\end{equation}
This approximation is sparse: it only has $k(2k -1)$ non-zero coefficients.
In order to better understand the structure of the approximation, we can compute the matrix $\left(H^iM\right)$ for $M\in \bbR^{k\times k}$. 
We find $\left(H^iM\right)_{ab} = \Gamma^i_{ab}M_{ab} + M_{ba}$: $H^iM_{ab}$ only depends on $M_{ab}$ and $M_{ba}$, indicating a simple block diagonal structure of $H^i$.
The tensor $H^i$ is therefore easily regularized and inverted:
$\left((H^i)^{-1}M\right)_{ab} = \frac{\Gamma^i_{ba}M_{ab} - M_{ba}}{\Gamma^i_{ab}\Gamma^i_{ba} - 1}$.
Finally, since this approximation is obtained by assuming that the $\yb^i$ are independent, the direction $-\left(H^i\right)^{-1}G^i$ is close to Newton's direction when the $\yb^i$ are close to independence, leading to fast convergence.
Algorithm~\ref{algo:mv_ica} alternates one step of the quasi-Newton method for each subject until convergence.
A backtracking line-search is used to ensure that each iteration leads to a decrease of $\loss^i$.
The algorithm is stopped when maximum norm of the gradients over one pass on each subject is below some tolerance level, indicating that the algorithm is close to a stationary point.

\begin{algorithm}[H]
\label{algo:mv_ica}
\SetAlgoLined
\KwIn{Dataset $(\xb^i)_{i=1}^m$, initial unmixing matrices $W^i$, noise parameter $\sigma$, function $f$,  tolerance $\varepsilon$}
Set tol$=+\infty$, $\tilde{\sbb} = \frac1m\sum_{i=1}^kW^i\xb^i$\\
 \While{\text{tol}$>\varepsilon$}{
 tol = 0 \\
  \For{$i=1\dots m$}{
  Compute $\yb^i = W^i \xb^i$, $\tilde{\sbb}^{-i} = \tilde{\sbb} - \frac1m\yb^i$, gradient $G^i$ (eq.~\eqref{eq:gradient}) and Hessian $H^i$ (eq.~\eqref{eq:hessian_approx})\\
  Compute the search direction $D = -\left(H^i\right)^{-1}G^i$\\
  Find a step size $\rho$ such that $\loss^i((I_k + \rho D)W^i) < \loss^i(W^i)$ with line search\\
  Update $\tilde{\sbb} = \tilde{\sbb} + \frac{\rho}{m} DW^i \xb^i$, $W^i = (I_k + \rho D)W^i$, tol$=\max($tol$,\|G^i\|)$\\
  }
 }
 \Return{Estimated unmixing matrices $W^i$, estimated shared sources $\tilde{\sbb}$}
 \caption{Alternate quasi-Newton method for MultiView ICA}
\end{algorithm}
\subsection{Robustness to model misspecification}
Algorithm~\ref{algo:mv_ica} has two hyperparameters: $\sigma$ and the function $f$.
The latter is usual for an ICA algorithm, but the former is not.
We study the impact of these parameters on the separation capacity of the algorithm, when these parameters do not correspond to those of the generative model~\eqref{eq:ica_model}.
\begin{proposition}
\label{prop:robust}
We consider the cost function $\loss$ in eq.~\eqref{eq:cost_function} with noise parameters $\sigma$ and function $f$.
Assume sub-linear growth on $f'$: $|f'(x)|\leq c|x|^{\alpha} + d$ for some $c, d > 0$ and $0<\alpha<1$.
Assume that $\xb^i$ is generated following model~\eqref{eq:ica_model}, with noise parameter $\sigma'$ and density of the source $d'$ which need not be related to $\sigma$ and $f$.
Then, there exists a diagonal matrix $\Lambda$ such that $(\Lambda (A^1)^{-1}, \dots, \Lambda (A^m)^{-1})$ is a stationary point of $\loss$, that is $G^1,\dots, G^m =0$ at this point.
\end{proposition}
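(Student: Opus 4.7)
The plan is to evaluate the expected gradient $\mathbb{E}[G^i]$ of eq.~\eqref{eq:gradient} at the candidate point $W^i = \Lambda (A^i)^{-1}$, show that it is diagonal, and then pick the diagonal entries of $\Lambda$ so that every diagonal entry vanishes. Under the generative model $\xb^i = A^i(\sbb + \nb'^i)$, this substitution yields $\yb^i = \Lambda(\sbb + \nb'^i)$, $\tilde{\sbb} = \Lambda(\sbb + \bar{\nb})$ with $\bar{\nb} = \frac{1}{m}\sum_j \nb'^j$, and $\yb^i - \frac{m}{m-1}\tilde{\sbb}^{-i} = \Lambda(\nb'^i - \bar{\nb}^{-i})$ with $\bar{\nb}^{-i} = \frac{1}{m-1}\sum_{j\neq i}\nb'^j$. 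Every quantity entering $G^i$ becomes a linear image through $\Lambda$ of the sources and of mutually independent Gaussian noises.

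Next I would take expectations term by term. For the quadratic term, independence of $\nb'^i$ and $\bar{\nb}^{-i}$ from $\sbb$, together with independence across subjects of the noises and their zero mean, collapses $\mathbb{E}[(\nb'^i - \bar{\nb}^{-i})(\sbb + \nb'^i)^\top]$ to $\mathbb{E}[\nb'^i(\nb'^i)^\top] = \sigma'^2 I_k$, so this contributes the diagonal $\frac{(1-1/m)\sigma'^2}{\sigma^2}\Lambda^2$. For the score term $\mathbb{E}[f'(\tilde{\sbb})(\yb^i)^\top]$, the $(a,b)$ entry only involves the coordinate pair $(\tilde{s}_a, y^i_b)$: $\tilde{s}_a$ is a function of the $a$-th coordinate of $\sbb$ and the $a$-th coordinates of the noises, while $y^i_b$ is a function of the corresponding $b$-th coordinates, so by coordinate-wise independence of $\sbb$ and the white-noise structure these two random variables are independent whenever $a \neq b$, and the expectation factorizes and vanishes under the standard centering assumption on $\sbb$. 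Combining with $-I_k$, $\mathbb{E}[G^i]$ is diagonal, with all diagonal entries equal to a single scalar function $g(\Lambda_{aa})$, the same across $a$ because all sources share the density $d'$.

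Finally I would solve $g(\lambda) = 0$ via the intermediate value theorem. Explicitly, $g(\lambda) = \frac{\lambda}{m}\mathbb{E}[f'(\lambda(s + \bar{n}))(s + n)] + \frac{(1-1/m)\sigma'^2}{\sigma^2}\lambda^2 - 1$. One has $g(0) = -1$. As $\lambda \to +\infty$, the sub-linear bound $|f'(x)| \leq c|x|^\alpha + d$ gives $|\lambda f'(\lambda z)(s+n)| \leq c|\lambda|^{1+\alpha}|z|^\alpha|s+n| + d|\lambda||s+n|$ pointwise, so in expectation the score term grows at most like $|\lambda|^{1+\alpha}$, whereas the quadratic penalty grows as $\lambda^2$. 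Since $1+\alpha < 2$, we obtain $g(\lambda) \to +\infty$; the same envelope yields continuity of $g$ by dominated convergence. Picking any root $\lambda^\ast > 0$ and setting $\Lambda = \lambda^\ast I_k$ finishes the argument.

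The main obstacle will be the diagonality step: one must carefully exploit the coordinate-wise independence of $\sbb$, the white-noise structure, and source centering in order to kill every off-diagonal contribution of the score term after the averaging operations that build $\tilde{\sbb}$ and $\tilde{\sbb}^{-i}$. Once diagonality is established, the one-dimensional zero-finding is essentially forced by the hypothesis $\alpha < 1$, which is precisely what ensures that the quadratic penalty dominates and produces the sign change of $g$ at infinity.
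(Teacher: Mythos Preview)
Your proposal is correct and follows essentially the same route as the paper: substitute $W^i=\Lambda(A^i)^{-1}$, observe that the expected gradient is diagonal and subject-independent, then apply the intermediate value theorem to the resulting scalar function $g(\lambda)$ using $g(0)=-1$ and the sub-linear growth of $f'$ to force $g(+\infty)=+\infty$. You are slightly more explicit than the paper in justifying the off-diagonal vanishing (via coordinate-wise independence and source centering) and in noting that a single $\lambda^\ast$ works for all coordinates because the sources share one density, but these are elaborations of the same argument rather than a different approach.
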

The sub-linear growth of $f'$ is a customary hypothesis in ICA which implies that $d$ has heavier-tails than a Gaussian, and in appendix~\ref{ref:robust} we provide other conditions for the result to hold.
In this setting, the shared sources estimated by the algorithm are $\tilde{\sbb} = \Lambda (\sbb + \frac1m \sum_{i=1}^m \nb^i)$, which is a scaled version of the best estimate of the shared sources under the Gaussian noise hypothesis.

This proposition shows that, up to scale, the true unmixing matrices are a stationary point for Algorithm~\ref{algo:mv_ica}: if the algorithm starts at this point it will not move.
The question of stability is also interesting: if the algorithm is initialized ~\emph{close} to the true unmixing matrices, will it converge to the true unmixing matrix?
In the appendix~\ref{sec:stability}, we provide an analysis similar to~\cite{cardoso1998blind}, and derive sufficient numerical conditions for the unmixing matrices to be local minima of $\mathcal{L}$.
We also study the practical impact of changing the hyperparameter $\sigma$ on the accuracy of a machine learning pipeline based on MultiviewICA on real fMRI data in the appendix Sec.~\ref{sec:app_sigma_impact}.
As expected from the theoretical study, the performance of the algorithm is barely affected by $\sigma$.
\subsection{Dimensionality reduction}
So far, we have assumed that the dimensionality of each view (subject) and that of the sources is the same. This reflects the standard practice in ICA of having equal number of observations and sources. 
In practice, however, we might want to estimate fewer sources than there are observations per view; the original dimensionality of the data 
might in practice not be computationally tractable.
The problem of how to perform subject-wise dimensionality reduction in group studies 
is an interesting one \emph{per se}, and out of the main scope of this work. For our purposes, it can be considered as a preprocessing step for which well-known various solutions can be applied. 
We discuss this further in section~\ref{sec:rel_work} and in appendix~\ref{sec:app_rel_work}.
\section{Related Work}
\label{sec:rel_work}
Many methods for data-driven multivariate analysis of neuroimaging group studies have been proposed. We summarize the characteristics of some of the most commonly used ones. A more thorough description of these methods can be found in appendix~\ref{sec:app_rel_work}.
For completeness, we start by describing PCA. For a zero-mean data matrix $X$ of size $p\times n$ with $p \leq n$, we denote $X= UD V^{\top}$ the singular value decomposition of $X$ where $U \in \bbR^{p\times p}$, $V \in \bbR^{n \times p}$ are orthogonal and $D$ the diagonal matrix of singular values ordered in decreasing order.
The PCA of $X$ with $k$ components is $Y\in\bbR^{k\times n}$ containing the first $k$ rows of $DV^{\top}$, and it does not hold in general that $YY^{\top}=I_k$: for the rest of the paper, what we call PCA does not include whitening of the signals.

\textbf{Group ICA} When datasets are high-dimensional, a three steps procedure is often used: first dimensionality reduction is performed on data of each subject  separately; then the reduced data are merged into a common representation; finally, an ICA algorithm is applied for shared source extraction. The merging of the reduced data is often done by PCA \cite{calhoun2001method} or multi set CCA \cite{varoquaux2009canica}.
This is a popular method for fMRI~\cite{calhoun2009review} and EEG~\cite{eichele2011eegift} group studies.
These methods directly recover only group level, shared sources; when individual sources are needed, additional steps are required (back-projection \cite{calhoun2001method} or dual-regression \cite{beckmann2009group}).
In contrast, MultiView ICA finds individual and shared independent components in a single step.
Finally, in contrast to the methods described above, our method maximizes a likelihood, which brings statistical guarantees like consistency or asymptotic efficiency.
The SR-ICA approach of \cite{zhang2016searchlight} performs dimension reduction, merging and independent component estimation. It is therefore similar to our method.
However, they propose to modify the FastICA algorithm~\cite{hyvarinen1999fast} in a rather heuristic way, without specifying an optimization problem, let alone maximizing a likelihood. In the experiments on fMRI data in appendix~\ref{appendix_reproduce}, we obtain better performance with MultiView ICA than the reported performance of SR-ICA.

\textbf{Likelihood-based models} One can consider the more general model $\xb^i = A^i\sbb^i + \nb^i$, where the noise covariance can be learned from the data~\cite{guo2008unified}.
The likelihood for this model involves an intractable high dimensional integral that is cumbersome to evaluate, and is then optimized with the Expectation-Maximization (EM) algorithm, which is known to converge slowly and unreliably~\cite{bermond1999approximate, petersen2005slow}.
Having the simpler model~\eqref{eq:ica_model} leads to a closed-form likelihood, that can then be optimized by more efficient means than the EM algorithm.
In model~\eqref{eq:ica_model}, the noise can be interpreted as individual variability rather than sensor noise. 
In appendix~\ref{app:complex_cov}, we generate data following model $\xb^i = A^i\sbb^i + \nb^i$ and report the reconstruction error. The difference in performance between algorithms is small. 

\textbf{Structured mixing matrices} One strength of our model is that we only assume that the mixing matrices are invertible and still enjoy identifiability whereas some other approaches impose additional constraints. For instance tensorial methods~\cite{beckmann2005tensorial} assume that the mixing matrices are the same up to diagonal scaling.
Other methods impose a common mixing matrix~\cite{cong2013validating, grin2010independent, calhoun2001fmri, Monti18UAI}. Like PCA, the Shared Response Model~\cite{chen2015reduced} (SRM) assumes orthogonality of the mixing matrices. While the model defines a simple likelihood and provides an efficient way to reduce dimension, the SRM model is not identifiable as shown in appendix~\ref{sec:app_identifiability}, and the orthogonal constraint may not be plausible.

\textbf{Matching sources a posteriori} A different path to multi-subject ICA is to extract independent components with individual ICA in each subject and align them. We propose a simple baseline approach to do so called \emph{PermICA}.
Inspired by the heuristic of the hyperalignment method~\cite{haxby2011common} we choose a reference subject and first match the sources of all other subjects to the sources of the reference subject. The process is then repeated multiple times, using the average of previously aligned sources as a reference. Finally, group sources are given by the average of all aligned sources. We use the Hungarian algorithm to align pairs of mixing matrices~\cite{tichavsky2004optimal}.
Alternative approaches involving clustering have also been developed~\cite{esposito2005independent,bigdely2013measure}.

\textbf{Deep Learning} Deep Learning methods, such as convolutional auto-encoders (CAE), can also be used to find the subject specific unmixing~\cite{chen2016convolutional}. While these nonlinear extensions of the aforementioned methods are interesting, these models are hard to train and interpret. In the experiments on fMRI data in appendix~\ref{appendix_reproduce}, we obtain better accuracy with MultiView ICA than that of CAE reported in~\cite{chen2016convolutional}.

\textbf{Correlated component analysis} Other methods can be used to recover the shared neural responses such as the correlated component approach of Dmochowski~\cite{dmochowski2012correlated}. We benchmark our method against its probabilistic version~\cite{kamronn2015multiview} called BCorrCA in Figure~\ref{fig:meg}. Our method yields much better results. 

\textbf{Autocorrelation} Another way to perform ICA is to leverage spectral diversity of the sources rather than non-Gaussianity.
These methods are popular alternative to non-Gaussian ICA in the single-subject setting~\cite{tong1991indeterminacy, belouchrani1997blind, pham1997blind} and they output significantly different sources than non-Gaussian ICA~\cite{delorme2012independent}.
Extensions to multiview problems have been proposed~\cite{lukic2002ica, congedo2010group}.
\vspace{-5pt}
\section{Experiments}
\label{sec:expts}
All code for the experiments is written in Python.
We use Matplotlib for plotting~\cite{hunter2007matplotlib} , scikit-learn for
machine-learning pipelines~\cite{pedregosa2011scikit}, MNE for MEG
processing~\cite{gramfort2013meg}, Nilearn for fMRI processing and for its CanICA implementation~\cite{abraham2014machine}, Brainiak~\cite{kumar2020brainiak} for its SRM implementation. 
In the following, the noise parameter in MultiviewICA is always fixed to $\sigma =1$.
We use the function $f(\cdot)= \log\cosh(\cdot)$, giving the non-linearity $f'(\cdot) = \tanh(\cdot)$.
We use the Infomax cost function~\cite{bell1995information} with the same non-linearity to perform standard ICA, with the Picard algorithm~\cite{ablin2018faster} for fast and robust minimization of the cost function. Picard is applied with the default hyper-parameters.
The code for MultiViewICA is available online at \url{https://github.com/hugorichard/multiviewica}.

We compare the following methods to obtain $k$ components:
\emph{GroupPCA} is PCA on spatially concatenated data. It corresponds to a transposed version of \cite{smith2014group}.
\emph{PermICA} is described in the previous section.
\emph{SRM} is  the algorithm of~\cite{chen2015reduced}.
\emph{GroupICA} is ICA applied after GroupPCA.
\emph{PCA+GroupICA} corresponds to GroupICA applied on subject data that have been first individually reduced by PCA with $k$ components. 
These two approaches correspond to transposed versions of~\cite{calhoun2001fmri}, and are similar to \cite{eichele2011eegift}.
\emph{CanICA} corresponds to PCA+GroupICA where the merging is done using multi set CCA rather than PCA. The dimension reduction in MultiView ICA and PermICA is performed with SRM in fMRI experiments and subject-specific PCA in MEG experiments. Initialization is discussed in appendix~\ref{sec:app_init}. A summary of our quantitative results on real data is available in appendix~\ref{sec:app_real_data}.

\begin{wrapfigure}{l}{0.4\textwidth}
\label{fig:synth}
\includegraphics[width=0.38\textwidth]{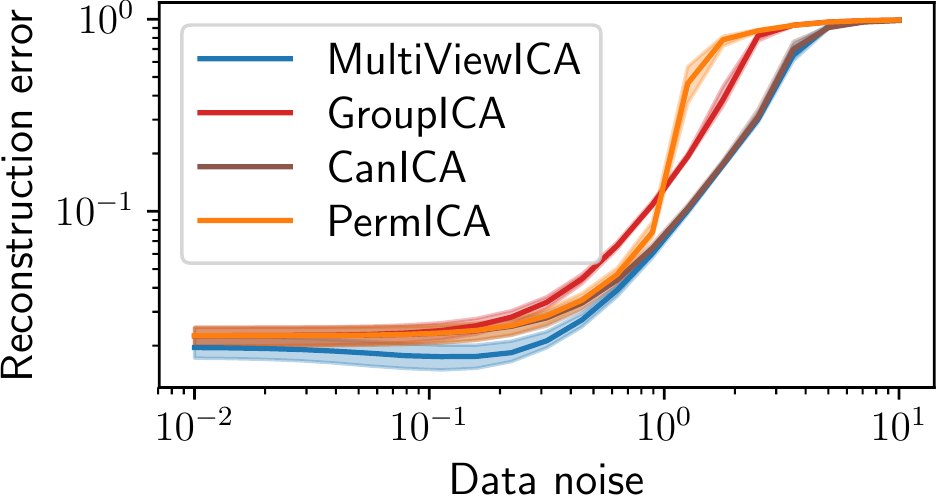}
\captionof{figure}{\textbf{Synthetic experiment}: reconstruction error of the algorithms on data following model~\eqref{eq:ica_model}.}
\end{wrapfigure}

\textbf{Synthetic experiment}
We validate our method on synthetic data generated according to the model in equation~\eqref{eq:ica_model}.
The sources are generated i.i.d. from a Laplace density $d(x)=\frac12\exp(-|x|)$.
The mixing matrices $A^1,\cdots, A^m$ are generated with i.i.d. entries following a normal law.
Each compared algorithm returns a sequence of estimated unmixing matrices $W^1, \dots, W^m$.
The performance of an algorithm is measured by the reconstruction error between the estimated sources and the true sources.
%
%
We use $m=10$ datasets, $k=15$ sources and $n=1000$ samples. Each experiment is repeated with $100$ random seeds.
We vary the noise level in the data generation from $10^{-2}$ to $10$.

Multiview ICA has uniformly better performance than the other algorithms, which illustrates the strength of maximum-likelihood based methods. In accordance with results of section~\ref{sec:mvica}, it is able to separate the sources even with misspecified noise parameter and source density.
%

\textbf{fMRI data and preprocessing} 
We evaluate the performance of our approach on four different fMRI datasets.
The \emph{sherlock} dataset~\cite{chen2017shared} contains recordings of 16 subjects watching an episode of the BBC TV show "Sherlock" (50 mins).
The \emph{forrest} dataset~\cite{hanke2014high} was collected while 19 subjects were listening to an auditory version of the film "Forrest Gump" (110 mins).
The \emph{clips} dataset~\cite{ibc} was collected while 12 participants were exposed to short video clips (130 mins).
The \emph{raiders} dataset~\cite{ibc} was collected while 11 participants were watching the movie "Raiders of the Lost Ark" (110 mins).
The \emph{raiders-full} dataset~\cite{ibc} is an extension of the \emph{raiders} dataset where the first two scenes of the movie are shown twice (130 mins).
Like \cite{zhang2016searchlight}, we used full brain data. The rest of the preprocessing is identical to \cite{chen2017shared}. See \ref{preprocessing} for a detailed description of the datasets and preprocessing steps. Unless stated otherwise we use spatially unsmoothed data, except for the \emph{sherlock} dataset, for which the available data are already preprocessed with a 6\,mm spatial smoothing. All datasets are built from successive acquisitions called \emph{runs} that typically last 10 minutes each.
We define the chance level as the performance of an algorithm that computes unmixing matrices and projections to lower dimensional space by sampling random numbers from a standard normal distribution. 

\textbf{Reconstructing the BOLD signal of missing subjects}
We want to show that once unmixing matrices have been learned, they can be
used to predict
evoked responses across subjects, which can be useful to perform transfer learning~\cite{zhang2018transfer}.
We split the data into three groups. First, we randomly choose $80\%$ of all runs from all subjects to form the training set.
Then, we randomly choose $80\%$ of subjects and take the remaining $20\%$  runs as testing set.
The left-out runs  of the remaining $20\%$ subjects form the validation set.
The compared algorithms are run on the training set and evaluated using the testing and validation sets.
After an algorithm is run on training data, it defines for each subject a \emph{forward operator} that maps individual data to the source space and a \emph{backward operator} that maps the source space to individual data. For instance in ICA the forward operator is the product of the dimensionality reduction projection and unmixing matrix.
We estimate the shared responses on the testing set by applying the forward operators on the testing data and averaging. Finally, we reconstruct the individual data from subjects in the validation set by applying the backward operators to the shared responses. We measure the difference between the true signal and the reconstructed one using voxel-wise $R^2$ score. The $R^2$ score between two series $\xb \in \bbR^n$ and $\yb \in \bbR^n$ is defined as
$R^2(\xb, \yb) = 1 - \frac1{n\Var(\yb)}\sum_{t=1}^n (x_t - y_t)^2$, where $\Var(\yb) = \frac1n\sum_{t=1}^n (y_t - \frac1n \sum_{t'=1}^n y_{t'})^2$ is the empirical variance of $\yb$.
The $R^2$ score is always smaller than $1$, and equals $1$ when $\xb= \yb$.
The experiment is repeated 25 times with random splits to obtain error bars.

In this experiment we apply a 6\,mm spatial smoothing to all datasets. The $R^2$ score per
voxel depends heavily on which voxel is considered. For example voxels in the
visual cortex are better reconstructed in the \emph{sherlock} dataset than in
the \emph{forrest} dataset (see Figure~\ref{fig:brainmaps} in appendix~\ref{brainmaps}). In Figure~\ref{fig:reconstruction}
(top) we plot the mean $R^2$ score inside a region of interest (ROI) in order to leave out regions where there is no useful information.
ROIs are chosen based on the performance of GroupICA (more
details in appendix~\ref{brainmaps}).
MultiView ICA has similar or better performance than the other methods on all datasets.
This demonstrates its ability to capture inter-subject variability, making it a candidate of choice to handle missing data or perform transfer learning.

\begin{figure}
  \centering
  \includegraphics[width=0.9\textwidth]{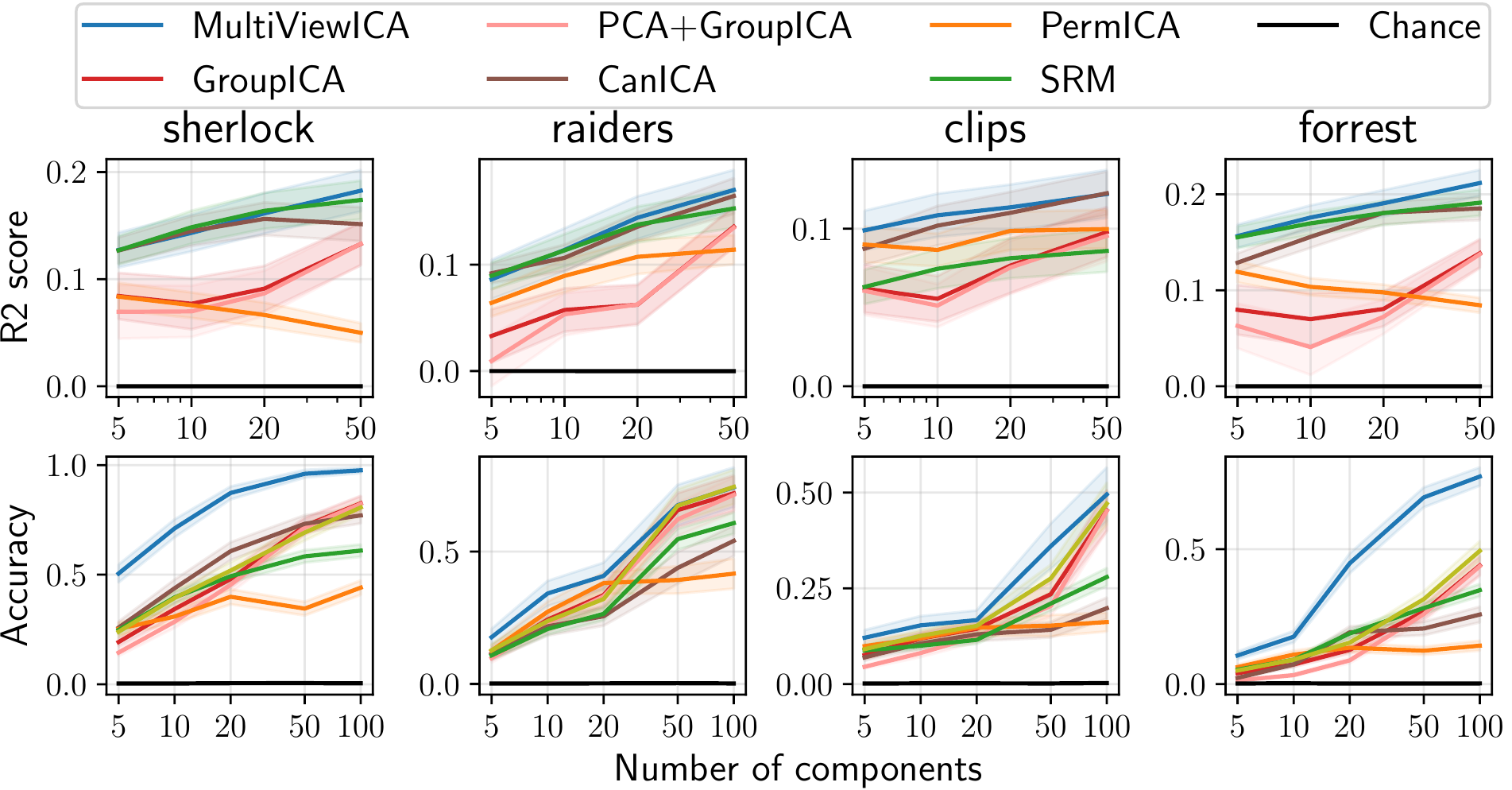}
  \caption{\emph{Top:} \textbf{Reconstructing the BOLD signal of
      missing subjects}. Mean $R^2$ score between reconstructed data and true
    data (higher is better). \emph{Bottom:} \textbf{Between subjects time-segment matching}. Mean
    classification accuracy. Error bars represent a 95 \% confidence interval over cross validation splits.}
  \label{fig:reconstruction}
  \label{fig:timesegment}
\end{figure}

\textbf{Between subjects time-segment matching} 
We reproduce the time-segment matching experiment of
\cite{chen2015reduced}. 
We split the runs into a train and test set. After fitting the model on the training set, we apply the forward operator of each subject on the test set yielding individual sources matrices. We estimate the shared responses by averaging the individual sources of each subjects but one.  We select a target time-segment (9 consecutive timeframes) in the shared responses and try to localize the corresponding time segment in the sources of the left-out subject using a maximum-correlation classifier.
This is a standard evaluation of SRM-like methods also used in  \cite{chen2015reduced}, \cite{guntupalli2018computational}, \cite{Nastase741975} or
\cite{zhang2016searchlight}.
The time-segment is said to be
correctly classified if the correlation between the sample and target
time-segment is higher than with any other time-segment (partially overlapping time windows are excluded).
We use 5-Fold cross-validation across runs: the training set contains 80\% of the runs and the test set 20\%, and repeat the experiment using all possible choices for left-out subjects. 
The mean accuracy is reported in Figure~\ref{fig:timesegment} (bottom). 
MultiView ICA yields a consistent and substantial improvement in accuracy compared to other methods on the four datasets. We see a marked improvement on the datasets sherlock and forrest. A possible explanation lies in the preprocessing pipeline. Sherlock data undergo a 6~mm spatial smoothing and Forrest data are acquired at a higher resolution (7T vs 3T for other data). This affects the signal to noise ratio.
In appendix~\ref{sec:app_sigma_impact}, we compute the accuracy of MultiviewICA on the sherlock dataset with 10 components when the noise parameter varies. MultiviewICA performs consistently well for a wide range of noise parameter values, and only breaks at very high values. It supports the theoretical claim of Prop~\ref{prop:robust} that the noise parameter is of little importance.

In appendix~\ref{app_spatialmaps}, we present a variation of this experiment.  We measure the ability of each algorithm to extract meaningful shared sources that correlate more when they correspond to the same stimulus than when they correspond to distinct stimuli and show the improved performance of MultiView ICA. 
In appendix~\ref{sec:spatial_maps}, we plot the average forward operator across subjects of MultiView ICA and GroupICA with 5 components on the forrest, sherlock, raiders and clips datasets.

\textbf{Phantom MEG data}
We demonstrate the usefulness of our approach on MEG data.
The first experiment uses data collected with a realistic head phantom, which is a plastic device mimicking real electrical brain sources.
Eight current dipoles positioned at different locations can be switched on or off.
We view each dipole as a subject and therefore have $m=8$.
We only consider the 102 magnetometers.
An epoch corresponds to 3\,s of MEG signals where a dipole is switched on for 0.4\,s with an oscillation at 20\,Hz and a peak-to-peak amplitude of 200\,nAm.
This yields a matrix of size $p\times n$ where $p=102$ is the number of sensors, and $n$ is the number of time samples.
We have access to $100$ epochs per dipole.
For each dipole, we chose $N_e=2, \dots, 16$ epochs at random among our set of 100 epochs and concatenate them in the temporal dimension.
We then apply algorithms on these data to extract $k=20$ shared sources.
As we know the true source (the timecourse of the dipole), we can compute the reconstruction error of each source as the squared norm of the difference between the estimated source and the true source, after normalization to unit variance and fixing the sign.
We only retain the source of minimal error.
We also estimate for each forward operator the localization of the source by performing dipole fitting using its column corresponding to the source of minimal error.
We then compute the distance of the estimated dipole to the true dipole.
These metrics are reported in figure~\ref{fig:meg} when the number of epochs considered $N_e$ varies.
We also compare our method to the Bayesian Canonical Correlation Analysis (BCorrCA) of~\cite{kamronn2015multiview}.
On this task, BCorrCA is outperformed by ICA methods.
MultiView ICA requires fewer epochs to correctly reconstruct and localize the true source.

\textbf{Experiment on Cam-CAN dataset}
Finally, we apply MultiView ICA on the Cam-CAN dataset~\cite{taylor2017cambridge}. We use the magnetometer data from the MEG of $200$ subjects.
Each subject is repeatedly presented an audio-visual stimulus. 
The MEG signal corresponding to these trials are then time-averaged to isolate the evoked response, yielding individual data.
The MultiView ICA algorithm is then applied to extract $20$ shared sources.
$9$ sources were found to correspond to noise by visual inspection, and the $11$ remaining are displayed in figure~\ref{fig:meg}.
We observe that MultiView ICA recovers a very clean sequence of evoked potentials with sharp peaks
for early components and slower responses for late components.
In order to visualize their localization, we perform source localization for each subject by solving the inverse problem using sLORETA~\cite{pascual2002standardized}, providing a source estimate for each source.
Then, we register each source estimate to a common reference brain.
Finally, the source estimates are averaged, and thresholded maps are displayed in figure~\ref{fig:meg}.
Individual maps corresponding to each source are displayed in appendix~\ref{sec:app_montages}.
The figure highlights both early auditory and visual cortices, also suggesting a propagation
of the activity towards the ventral regions and higher level visual areas.

\begin{figure}
    \centering
    \begin{minipage}{\linewidth}
      \begin{minipage}{0.55\linewidth}
        \begin{subfigure}{\textwidth}
            \includegraphics[width=0.99\linewidth]{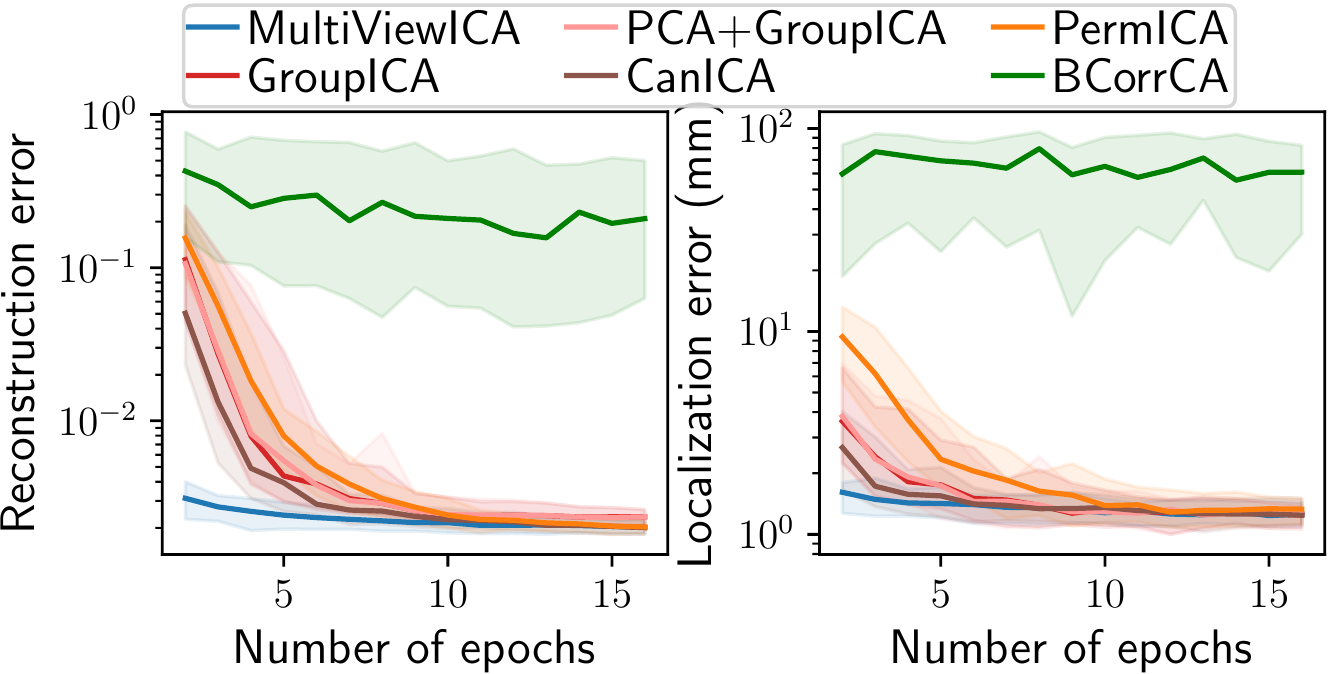}
            \caption{Phantom experiment} 
        \end{subfigure}
      \end{minipage}
      \hfill
      \begin{minipage}{0.45\linewidth}
        \begin{subfigure}[t]{\textwidth}
            \includegraphics[width=0.99\textwidth]{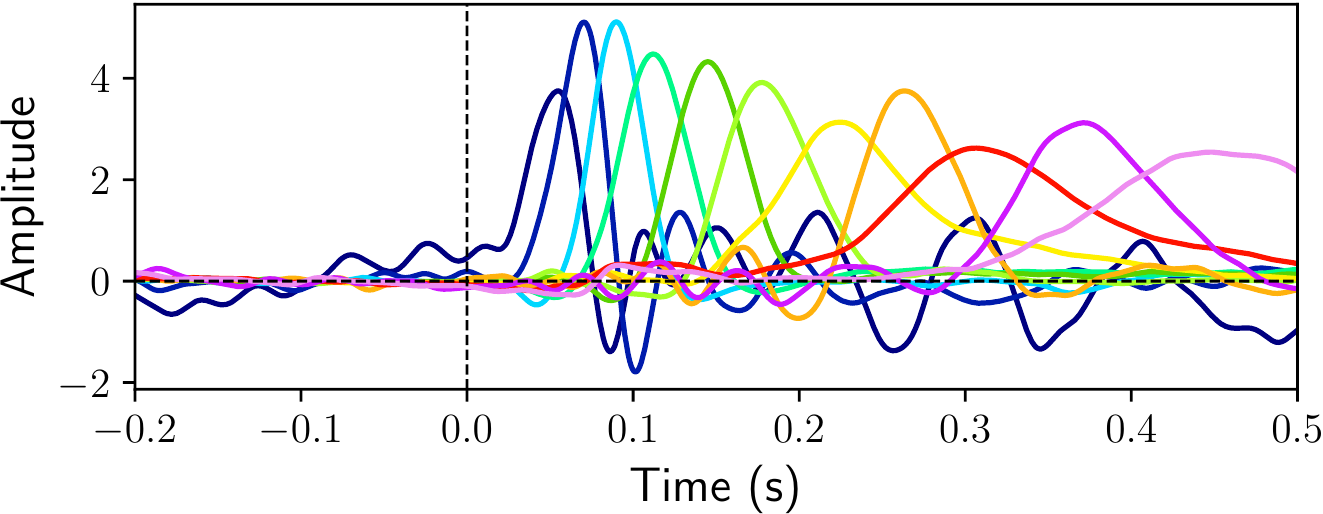}
        \end{subfigure}\\
        \begin{subfigure}[t]{\textwidth}
            \includegraphics[width=0.99\textwidth]{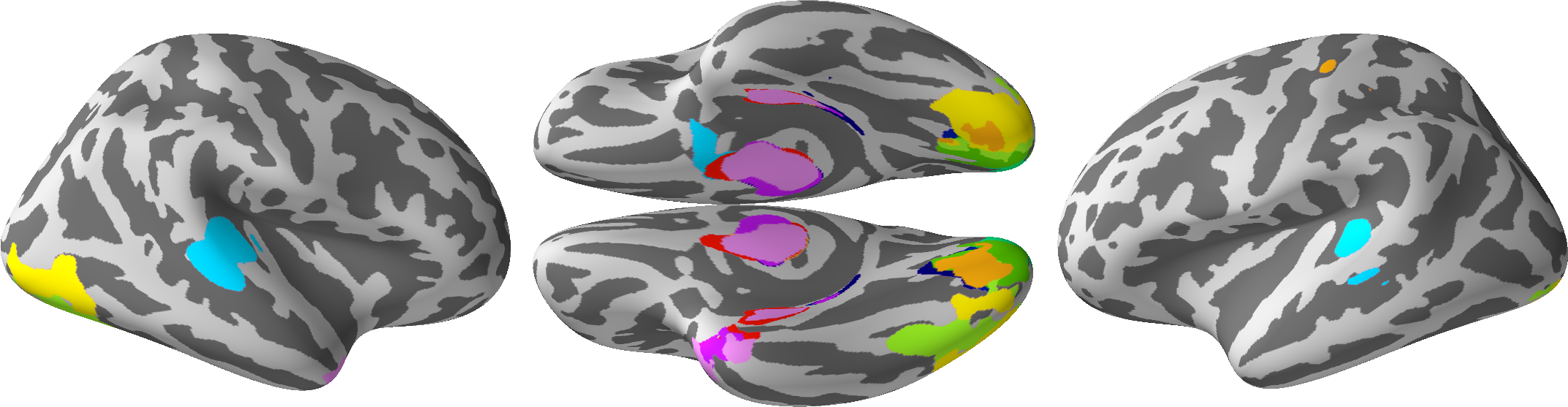}
            \caption{Cam-CAN experiment}
        \end{subfigure}
      \end{minipage}
    \end{minipage}
    \setlength{\belowcaptionskip}{-10pt}
    \caption{\emph{Left:} \textbf{Experiment on MEG Phantom data}. Reconstruction error is the norm of the difference between the estimated and true source. Localization error is the distance between the estimated and true dipole. \emph{Right:} \textbf{Experiment on 200 subjects from the CAM-can dataset} \emph{Top:} Time course of $11$ shared sources (one color per source). We recover clean evoked potentials. \emph{Bottom:} Associated brain maps, obtained by averaging source estimates registered to a common reference.}
    \label{fig:meg}
\end{figure}

\vspace{-11pt}
\section{Conclusion}
\label{sec:disc}
We have proposed a novel unsupervised algorithm that reveals latent sources
observed through different views. Using an independence assumption, 
we have demonstrated that the model is identifiable.
In contrast to previous approaches, the proposed model leads to a closed-form likelihood, which we then optimize efficiently using a dedicated alternate quasi-Newton approach.
Our approach enjoys the statistical guarantees of maximum-likelihood theory, while still being tractable.
We demonstrated the usefulness of MultiView ICA for neuroimaging group studies both on fMRI and MEG data, where it outperforms other methods.
In the experiments on fMRI data, we used temporal ICA in order to make use of the fact that subjects were exposed to the same stimuli. However, applying MultiViewICA on transposed data would carry out spatial ICA. Therefore MultiViewICA can be readily used to analyse different kind of neuroimaging data such as resting state data. 
Our method is not specific to neuroimaging data and could be relevant to other observational sciences like genomics or astrophysics where ICA is already widely used.

%
%
%
%
\section*{Broader Impact}
We develop a novel unsupervised learning method for Independent Component Analysis of a group of subjects sharing commmon sources.
Our method is not limited to a particular type of data, and could hence be employed in observational sciences where ICA is relevant: neurosciences, genomics, astrophysics, finance or computer vision for instance.
ICA is widely used in these fields as a tool among data processing pipelines, and therefore inherits from all the ethical questions of the fields above.
In particular, data collection bias will result in biased outputs.
Our algorithm is based on individual linear transforms and therefore decisions based on its application are easier to interpret than more complex models such as deep learning methods: in most applications, the set of parameters has a natural interpretation. For instance in EEG, MEG and fMRI processing, the coefficients of the linear operator can be interpreted as topographic brain maps.

\section*{Acknowledgement and funding disclosure}

This work was supported in part by the French government under management of Agence Nationale de la Recherche as part of the “Investissements d'avenir” program, references ANR19-P3IA-0001 (PRAIRIE 3IA Institute) and ANR17-CONV-0003 (DataIA Institute). It has also received funding from the European Union’s Horizon 2020 Framework Programme for Research and Innovation under the Specific Grant Agreement No. 945539 (Human Brain Project SGA3), the KARAIB AI chair (ANR-20-CHIA-0025-01) and the European Research Council grant ERC-SLAB-StG-676943. L.G. was hosted for part of this project by the Parietal team at Inria, Saclay, while on an ELLIS exchange. A.H. was additionally supported by CIFAR as a Fellow.

%
\bibliographystyle{plain}
\bibliography{multi_ica}

\clearpage

\appendix
\begin{center}
{\centering \LARGE APPENDIX}
\vspace{1cm}
\sloppy

\end{center}

\section{Likelihood}
\label{sec:app_likelihood}
\subsection{Initial form of likelihood}\label{sec:appendix:likelihood_transform}

To derive the likelihood, we start by conditioning on $\sbb$. Then, we make a variable transformation from $\xb^i$ to $\nb^i=W^i\xb^i-\sbb$, as opposed to the transformation to $\sbb$ as is usual in ICA. Using the probability transformation formula, we obtain
\begin{equation}
p(\xb^i|\sbb)=|W^i|p^i_n(W^i\xb^i-\sbb)    
\end{equation}
where $p^i_n$ is the distribution of $\nb^i$. Note that the $\xb^i$ are conditionally independent given $\sbb$, so we have their joint probability as
\begin{equation}
    p(\xb|\sbb)=\prod_{i=1}^m  |W^i| p^i_n(W^i\xb^i-\sbb)
\end{equation}
and we next get the joint probability as
\begin{equation}
    p(\xb,\sbb)=p(\sbb) \prod_{i=1}^m  |W^i| p^i_n(W^i\xb^i-\sbb)
\end{equation}
Integrating out $\sbb$ gives Eq.~(\ref{eq:likelihood}).

\subsection{Integrating out the sources}\label{sec:appendix:integration}

The integral in question, after factorization, is given by
\begin{equation}
\int_{\sbb} \prod_{j=1}^k \exp \left( -\frac{1}{2\sigma^2} \sum_{i=1}^m ((\wb_j^i)^{\top}\xb^i-s_j)^2 \right) d(s_j) d\sbb
\end{equation}
which factorizes for each $j$. Denote $y^i_j=(\wb_j^i)^{\top}\xb^i$ and $\tilde{s_j}=\frac1m\sum_{i=1}^m y^i_j$.  Fix $j$, and drop it to simplify notation. Then we need to solve the integral
\begin{align*}
   &\int_s \exp \left(-\frac{1}{2\sigma^2} \sum_{i=1}^m (y^i-s)^2 \right) d(s)ds\\
   &=\int_s \exp \left(-\frac{1}{2\sigma^2} [ m(\tilde{s}-s)^2 + \sum_{i=1}^m (y^i-\tilde{s})^2] \right) d(s)ds \\ 
&= \exp \left(-\frac{1}{2\sigma^2}\sum_{i=1}^m (y^i-\tilde{s})^2 \right) 
\int_z \exp \left(-\frac{m}{2\sigma^2} z^2 \right) d(\tilde{s}-z) dz
\end{align*}

where we have made the change of variable $z=\tilde{s}-s$. The remaining integral simply means that $d$ is smoothed by a Gaussian kernel, which can be computed exactly if $d$ is a Gaussian mixture. We therefore define $f(s) = \log \left(\int_z \exp \left(-\frac{m}{2\sigma^2} z^2 \right) d(s-z) dz\right)$.
\section{Initialization of MultiViewICA}
\label{sec:app_init}
Since the cost function $\mathcal{L}$ is non-convex, having a good initialization can make a difference in the final result.
We propose a two stage approach.
We begin by applying PermICA on the datasets, which gives us a first set of unimixing matrices $W_1^1, \dots, W_1^m$.
Note that we could also use GroupICA for this task.
Next, we perform a diagonal scaling of the mixing matrices, i.e. we find the diagonal matrices $\Lambda^1, \dots, \Lambda^m$ such that $\mathcal{L}(\Lambda^1W_1^1, \dots, \Lambda^mW_1^m)$ is minimized.
To do so, we employ Algorithm~\ref{algo:mv_ica} but only take into account the diagonal of the descent direction at each step: the update rule becomes $W^i \leftarrow (I_k + \rho \text{Diag}(D))W^i$.
The initial unmixing matrices for Algorithm~\ref{algo:mv_ica} are then taken as $\Lambda^1W_1^1, \dots, \Lambda^mW_1^m$.

Empirically, we find that this two stage procedure allows for the algorithm to start close from a satisfactory solution.
\section{Proofs of Section~\ref{sec:mvica}}
\label{sec:app_proofs}
\subsection{Proof of Prop.~\ref{prop:identifiability}}
We fix a subject $i$. Since $\sbb$ has independent components, so does $\sbb + \nb^i$. Following~\cite{comon1994independent},
Theorem 11, there exists a scale-permutation matrix $P^i$ such that $A'^i =
A^iP^i$. As a consequence, we have $\sbb  + \nb^i = P^i(\sbb' + \nb'^i)$ for all
$i$.

Then, we focus on subject 1 and subject $i \neq 1$:
\begin{align}
  &\sbb + \nb^1 - (\sbb + \nb^i) = P^1(\sbb' + \nb'^1) - P^i(\sbb' + \nb'^i)\\
  &\nb^1 - \nb^i = P^1(\sbb' + \nb'^1) - P^i(\sbb' + \nb'^i)\\
  &\iff P^1\sbb' - P^i\sbb' = P^i \nb'^i - \nb^i + \nb^1 - P^1 \nb'^1 \label{eq:condition_gaussian}
\end{align}
Since the right hand side of equation (\ref{eq:condition_gaussian}) is a linear combination of Gaussian random variables, this would imply that $P^1\sbb' - P^i\sbb'$ is also Gaussian. However, given that $\sbb'$ is assumed to be non-Gaussian, the equality can only hold if $P^1
= P^i$ and both the right and the left hand side vanish.
Therefore, the matrices $P^i$ are all equal, and there exists a scale and permutation matrix $P$ such that $A'^i = A^iP$.

\subsection{Proof of Prop.~\ref{prop:robust}}
\label{ref:robust}
We consider $W^i = \Lambda (A^i)^{-1}$, where $\Lambda$ is a diagonal matrix.
We recall $\xb^i= A^i (\sbb + \nb^i),$ so that $\yb^i = W^i\xb^i= \Lambda(\sbb + \nb^i)$.
The gradient of $\loss$ is given by eq.~\eqref{eq:gradient}:
\begin{align}
  G^i &= \frac{1}{m}f'(\tilde{\sbb})(\sbb + \nb^i)^{\top}\Lambda + \frac{1 - 1/m}{\sigma^2} \Lambda\left(\nb_i - \frac{1}{m-1}\sum_{j\neq i} \nb^j\right)(\sbb + \nb^i)^{\top}\Lambda - I_k \\
  & = \frac{1}{m}f'(\Lambda(\sbb + \frac1m\sum_j\nb^j))(\sbb+\nb^i)^{\top} \Lambda + \frac{\sigma'^2(1 - 1/m)}{\sigma^2} \Lambda^2 - I_k
\end{align}
where we write $f'(\sbb) = \begin{bmatrix}f'(s_1) \\ \vdots \\ f'(s_k) \end{bmatrix}$.
Therefore, $G^i$ is diagonal and constant across subjects (because $f'(\Lambda(\sbb + \frac1m\sum_j\nb^j))(\nb^i)^{\top} = f'(\Lambda(\sbb + \frac1m\sum_j\nb^j))(\nb^{i'})^{\top}$).
Let us therefore consider only its coefficient $(a, a)$, and let $\lambda = \Lambda_{aa}$:
$$
G^i_{aa} =G(\lambda) = \phi(\lambda)\lambda + \frac{\sigma'^2(1 - 1/m)}{\sigma^2}\lambda^2 - 1,
$$
where $\phi(\lambda) = \frac{1}{m}f'(\lambda(s_a + \frac1m\sum_j n^j_a))(s_a+n_a^i)$. One the one hand, we have $G(0) = -1$. On the other hand, if we assume for instance that $f'$ has sub linear growth (i.e. $|f'(x)| \leq c|x|^{\alpha} +d$ for some $\alpha < 1$) or that $\phi$ is positive, we find that $G(+\infty) = +\infty$.
Therefore, $G$ cancels, which concludes the proof.

\subsection{Stability conditions}
\label{sec:stability}
We consider $W^i = \Lambda (A^i)^{-1}$ where $\Lambda$ is such that the gradients $G^i$ all cancel. We consider a small relative perturbation of $W^i $ of the form $W^i \leftarrow (I_k + E^i)W^i$, and consider the effect on the gradient.
We define $\Delta^i=G^i\left((I_k + E^1)W^1, \dots, (I_k + E^m)W^m\right)$.
Denoting $C = \frac{1 - 1/ m}{\sigma^2}$ and $\tilde{\nb} = \frac1m\sum_{i=1}^m \nb^i$, we find:
\begin{align}
     &\Delta^i=\underbrace{\frac1m f'\left(\Lambda(\sbb + \tilde{\nb}) + \frac1m \sum_{j=1}^m E^j\Lambda(\sbb +\nb^j)\right)(\sbb +\nb^i)^{\top}\Lambda(I_k + E^i)^{\top} }_{\Delta_1^i} +\\
     &C\underbrace{\left(\Lambda\nb^i - \frac{1}{m-1}\sum_{j\neq i} \Lambda\nbb^j + E^i\Lambda(\sbb + \nb^i) - \frac{1}{m-1}\sum_{j\neq i} E^j \Lambda(\sbb + \nb^j)\right)(\sbb + \nb^i)^{\top}\Lambda(I_k + E^i)^{\top}}_{\Delta_2^i} \\
     &- I_k\\
\end{align}

The first term is expanded at the first order, denoting $S = \sum_{j=1}^m E^j$:

\begin{align}
    \Delta_1^i &= \frac1m \left(f'(\Lambda(\sbb + \tilde{\nb})) + f''(\Lambda(\sbb + \tilde{\nb}))\odot \left(\frac1m \sum_{j=1}^m E^j\Lambda(\sbb +\nb^j)\right)\right)(\sbb +\nb^i)^{\top}\Lambda(I_k + E^i)^{\top}\\
    &=\frac1m f'(\Lambda(\sbb + \tilde{\nb}))(\sbb + \nb^i)^{\top}\Lambda(I_k + E^i)^{\top} + \frac1{m^2}S\odot  \left(f''(\Lambda(\sbb + \tilde{\nb}))(\sbb^2)^{\top}\Lambda^2 \right)\\
    &+\frac{1}{m^2}E^i\odot\left(f''(\Lambda(\sbb + \tilde{\nb}))((\nb^i)^2)^{\top}\Lambda^2 \right)
\end{align}
The symbol $\odot$ denotes the element-wise multiplication, $f'(\sbb) = \begin{bmatrix}f'(s_1) \\ \vdots \\ f'(s_k) \end{bmatrix}$ and $f''(\sbb) = \begin{bmatrix}f''(s_1) \\ \vdots \\ f''(s_k) \end{bmatrix}$.
Similarly, the second term gives at the first order: 
\begin{align}
    \Delta_2^i &= \sigma'^2\Lambda^2(I_k + E^i)^{\top} + (1 + \sigma'^2)E^i\Lambda^2 - \frac{1}{m-1} (S - E^i) \Lambda^2
\end{align}

Combining this, we find:

\begin{align}
 \Delta^i = (E^i)^{\top} + E^i \odot\Gamma^E
 +S\odot \Gamma^S
\end{align}
where 
$$
\Gamma^E= \left(\frac1{m^2}f''(\Lambda(\sbb + \tilde{\nb}))((\nb^i)^2)^{\top} + (1-\frac1m)\frac{\sigma'^2}{\sigma^2} + \frac{1}{\sigma^2} \right)\Lambda^2
$$
$$
\Gamma^S =\left(\frac1{m^2}f''(\Lambda(\sbb + \tilde{\nb}))(\sbb^2)^{\top} -\frac{1}{m\sigma^2}  \right)\Lambda^2
$$

are $k\times k$ matrices, independent of the subject.
This linear operator is the Hessian block corresponding to the $i$-th subject:
Denoting $\mathcal{H}$ the Hessian, it is the mapping $\mathcal{H}(E^1, \dots, E^m) = (\Delta^1, \dots, \Delta^m)$.

The coefficient $\Delta^i_{ab}$ only depends on $(E^i_{ab}, E^i_{ba}, E^1_{ab},\dots, E^m_{ab})$. Therefore, the Hessian is block diagonal with respect to the blocks of coordinates $(E^1_{ab}, E^1_{ba}, \dots, E^m_{ab}, E^m_{ba})$. Denote $\varepsilon = \Gamma^E_{ab}$, $\varepsilon' = \Gamma^E_{ba}$, $\beta = \Gamma^S_{ab}$ and $\beta'= \Gamma^S_{ba}$. The linear operator for the block is:

$$
K(\varepsilon, \varepsilon', \beta, \beta')=
\left(
    \begin{array}{ll|ll|l|ll}
\varepsilon + \beta & 1       & \beta & 0       & \dots  & \beta & 0       \\
1      & \varepsilon' + \beta' & 0      & \beta' & \dots  & 0      & \beta' \\
\hline
\beta & 0       & \varepsilon + \beta & 1       &        & \beta & 0       \\
0      & \beta' & 1      & \varepsilon' + \beta' & \ddots & 0      & \beta' \\
\hline
\vdots & \vdots  &        & \ddots  & \ddots & \vdots & \vdots  \\
\hline
\beta & 0       & \beta & 0       & \dots  & \varepsilon + \beta & 1       \\
0      & \beta' & 0      & \beta' & \dots  & 1      & \varepsilon' + \beta'
    \end{array}
\right)
$$
The positivity of $\mathcal{H}$ is equivalent to the positivity of this operator for all pairs $a, b$.
We now assume $\beta \beta' > 0$.

First, we should note that $K(\varepsilon, \varepsilon', \beta, \beta') $ is congruent to $K(\varepsilon \sqrt{\frac{\beta'}{\beta}}, \varepsilon' \sqrt{\frac{\beta}{\beta'}}, \sqrt{\beta\beta'}, \sqrt{\beta\beta'})$ via the basis $\text{diag}((\frac{\beta'}{\beta})^{1/4}, (\frac{\beta}{\beta'})^{1/4}, \cdots,(\frac{\beta'}{\beta})^{1/4}, (\frac{\beta}{\beta'})^{1/4})$.
We denote to simplify notation $\alpha = \varepsilon \sqrt{\frac{\beta'}{\beta}}$, $\alpha' = \varepsilon' \sqrt{\frac{\beta}{\beta'}}$ and $\gamma = \sqrt{\beta\beta'}$. We only have to study the positivity of $K(\alpha, \alpha', \gamma, \gamma)$.
We have:
$$
K(\alpha, \alpha', \gamma, \gamma) =  I_m  \otimes M_\alpha+ \gamma  \mathbb{1}\otimes I_2, \enspace M_\alpha = 
\begin{pmatrix}
\alpha & 1 \\
1 & \alpha'
\end{pmatrix}
$$
Since $I_m\otimes M_\alpha$ and $\gamma \mathbb{1}\otimes I_2$ commute, the minimum value of $\text{Sp}(K)$ is $\text{min}(I_m\otimes M_\alpha) + \text{min}(\gamma\text{Sp}(\mathbb{1}))=\frac12(\alpha + \alpha' - \sqrt{(\alpha - \alpha')^2 + 4}) + m\min(0, \gamma)$.
Since we assumed $\beta \beta' > 0$ we have $\gamma > 0$. This is similar to the usual ICA case, we find that the condition is $\alpha\alpha' > 1$.

If the following conditions hold for all pair of sources $a, b$, the sources are a local minimum of the cost function:
\begin{itemize}
    \item $\Gamma^S_{ab}\Gamma^S_{ba}\geq 0$
    \item $\Gamma^E_{ab}\Gamma^E_{ba} > 1$
\end{itemize}
\section{Identifiability for Shared Response Model}
\label{sec:app_identifiability}
The shared response model~\cite{chen2015reduced} (SRM) models the data $\xb^i \in \bbR^v$ of subject $i$ for $i = 1,\dots, m$ as
\begin{align*}
    \xb^i = A^i \sbb + \nb^i \enspace \text{with} \enspace \sbb \sim \Ncal(0, \Sigma), \enspace\nb^i \sim \Ncal(0, \rho_i^2 I_v), \enspace {A^i}^{\top}A^i = I_k
\end{align*}
where $A^i \in \bbR^{v, k}$, $\sbb \in \bbR^k$ and  $\Sigma \in \mathbb{R}^{k, k}$ is a symmetric positive definite matrix.

\begin{proposition}
SRM is not identifiable
\end{proposition}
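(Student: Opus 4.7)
The plan is to exploit the Gaussianity of SRM to exhibit a continuous family of parameter transformations that leave the law of the observations $(\xb^1, \dots, \xb^m)$ unchanged. Since $\sbb$ and each $\nb^i$ are Gaussian and independent, the concatenated vector $(\xb^1, \dots, \xb^m)$ is centered Gaussian, and its distribution is determined entirely by its covariance blocks, which equal $A^i \Sigma (A^j)^{\top}$ for $i \neq j$ and $A^i \Sigma (A^i)^{\top} + \rho_i^2 I_v$ otherwise. Thus the parameters $(A^i, \Sigma)$ enter the likelihood only through the bilinear products $A^i \Sigma (A^j)^{\top}$.

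The key step is to introduce an arbitrary orthogonal matrix $Q \in \bbR^{k\times k}$ and set $\tilde{A}^i = A^i Q$, $\tilde{\Sigma} = Q^{\top} \Sigma Q$, keeping the noise variances $\rho_i^2$ unchanged. I would then verify three easy facts: (i) $(\tilde{A}^i)^{\top} \tilde{A}^i = Q^{\top} (A^i)^{\top} A^i Q = Q^{\top} Q = I_k$, so the orthonormality constraint on the mixing matrices is preserved; (ii) $\tilde{\Sigma}$ is symmetric positive definite as a congruence of $\Sigma$, so it is a valid source covariance; (iii) $\tilde{A}^i \tilde{\Sigma} (\tilde{A}^j)^{\top} = A^i (QQ^{\top}) \Sigma (QQ^{\top}) (A^j)^{\top} = A^i \Sigma (A^j)^{\top}$ for all $i,j$, so every covariance block of the observations is preserved. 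Consequently $\{\tilde{A}^i, \tilde{\Sigma}, \rho_i\}$ and $\{A^i, \Sigma, \rho_i\}$ induce the same joint law of the data and cannot be distinguished from any number of samples.

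Since $O(k)$ is a continuous $k(k-1)/2$-dimensional group for $k \geq 2$, this ambiguity is strictly larger than the discrete sign-and-permutation indeterminacy that is tolerated in Proposition~\ref{prop:identifiability}. Hence the mixing matrices and the source covariance are not identifiable separately, only the bilinear products $A^i \Sigma (A^j)^{\top}$ are recoverable, which is precisely the failure of identifiability claimed in the proposition.

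The main conceptual point — which is really the reason the argument works, rather than an obstacle — is that Gaussianity of $\sbb$ is exactly the assumption that destroys the classical ICA identifiability proof: any orthogonal reparameterization of the source space produces another equally valid SRM, and no higher-order information is available to break the rotational symmetry, in sharp contrast with the non-Gaussian argument used for MultiView ICA.
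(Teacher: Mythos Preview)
Your argument is correct and follows exactly the same idea as the paper's proof: both exhibit non-identifiability by post-multiplying each $A^i$ by a common orthogonal matrix and conjugating $\Sigma$ accordingly, checking that the SRM constraints are preserved. Your version is simply more explicit in verifying, via the Gaussian covariance blocks, that the induced data law is unchanged.
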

\begin{proof}
Let us assume the data $\xb^i \enspace i=1, \dots, m$ follow the SRM model with parameters $\Sigma, A^i, \rho_i^2 \enspace i=1, \dots, m$. 

Let us consider an orthogonal matrix $O \in \Ocal_k$.
We call $A'^i = A^i O$ and $\Sigma' = O^{\top} \Sigma O$. 
$\Sigma'$ is trivially symmetric positive definite.

Then the data also follows the SRM model with different parameters $\Sigma', A'^i, \rho_i^2 \enspace i=1, \dots, m$.
\end{proof}

\begin{proposition}
We consider the decorrelated SRM model with an additional decorrelation assumption on the shared responses.
\begin{align*}
\xb^i = A^i \sbb + \nb^i \enspace \text{with} \enspace \sbb \sim \Ncal(0, \Sigma), \enspace\nb^i \sim \Ncal(0, \rho_i^2 I_v), \enspace {A^i}^{\top}A^i = I_k
\end{align*}
where $\Sigma$ is a positive \emph{diagonal} matrix. We further assume that the values in $\Sigma$ are all distinct and ranked in ascending order.
The decorrelated SRM is identifiable up to sign indeterminacies on the columns of 
$\begin{bmatrix}
A^1 \\
\vdots \\
A^m
\end{bmatrix}
$.
\end{proposition}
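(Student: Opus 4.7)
The plan is to exploit the Gaussianity of the model: two parameter sets yield the same distribution iff all second-moment statistics coincide. The relevant statistics are the between-subject cross-covariances $\text{Cov}(\xb^i,\xb^j) = A^i\Sigma (A^j)^\top$ for $i\neq j$ (noise-free because the $\nb^i$ are independent across subjects) and the within-subject covariances $\text{Cov}(\xb^i) = A^i\Sigma (A^i)^\top + \rho_i^2 I_v$. My plan is to use the cross-covariances to recover $\Sigma$ and each $A^i$ up to column signs, then show that the signs must coincide across subjects, and finally read off the noise variances from the within-subject covariances.

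Suppose $(\Sigma',A'^i,\rho_i'^2)$ induces the same distribution, so that $A^i\Sigma (A^j)^\top = A'^i\Sigma' (A'^j)^\top$ for every $i\neq j$. Multiplying each side by its transpose on the right and using the Stiefel constraints $(A^j)^\top A^j = (A'^j)^\top A'^j = I_k$ collapses the $j$-dependence and gives $A^i\Sigma^2(A^i)^\top = A'^i\Sigma'^2 (A'^i)^\top$ for every $i$ (this requires $m\geq 2$, so that at least one partner $j$ exists). This identity is a spectral decomposition of a rank-$k$ positive semidefinite matrix whose nonzero eigenvalues are the distinct, ordered diagonal entries of $\Sigma^2$; distinct eigenvalues both fix the ordering and determine the eigenvectors up to sign, so $\Sigma = \Sigma'$ and $A'^i e_\ell = \epsilon_\ell^i\, A^i e_\ell$ for per-subject signs $\epsilon_\ell^i \in \{\pm 1\}$, where $e_\ell$ denotes the $\ell$-th canonical basis vector of $\bbR^k$.

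It remains to tie the signs across subjects. Substituting the column relation into the cross-covariance identity and using $\Sigma = \Sigma'$ yields
\[
\sum_{\ell=1}^k \Sigma_{\ell\ell}\bigl(1 - \epsilon_\ell^i \epsilon_\ell^j\bigr)\,(A^i e_\ell)(A^j e_\ell)^\top = 0 \quad\text{for all } i\neq j.
\]
The rank-one terms are linearly independent: right-multiplying the sum by $A^j e_m$ and using $(A^j)^\top A^j = I_k$ isolates the nonzero vector $A^i e_m$. Together with $\Sigma_{\ell\ell}>0$ this forces $\epsilon^i_\ell = \epsilon^j_\ell$ for every pair $(i,j)$, so the signs depend only on $\ell$ and the indeterminacy reduces to a single sign vector acting simultaneously on the columns of the vertical concatenation $[(A^1)^\top,\dots,(A^m)^\top]^\top$, as asserted. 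Recovery of the noise variances is then routine: column-sign flips cancel inside $A^i\Sigma (A^i)^\top$, so equating within-subject covariances yields $\rho_i^2 = \rho_i'^2$. I expect the decisive step to be the spectral argument in the second paragraph, which crucially leans on both $m\geq 2$ and the distinctness of the diagonal of $\Sigma$; without distinctness we would only recover $A^i$ up to a rotation within each eigenspace, collapsing into precisely the non-identifiability of the undecorrelated SRM established just above.
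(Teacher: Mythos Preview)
Your proof is correct and follows the same overall strategy as the paper: exploit Gaussianity so that identifiability reduces to matching second moments, use the cross-covariances $A^i\Sigma(A^j)^\top$ together with the distinctness of the diagonal of $\Sigma$ to pin down $\Sigma$ and the $A^i$ up to column signs, and then read off the noise variances from the within-subject covariances.

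The organizational difference is in which factorization you invoke. The paper applies the SVD directly to the cross-covariance $A^i\Sigma(A^j)^\top$: since $A^i,A^j$ have orthonormal columns and $\Sigma$ is positive diagonal, this \emph{is} a thin SVD, and uniqueness of the SVD with distinct singular values immediately couples the sign ambiguity on the $\ell$-th columns of $A^i$ and $A^j$ (a sign flip on a left singular vector forces the same flip on the right one), giving a common sign matrix $D$ in one stroke. You instead first multiply by the transpose to obtain the per-subject eigendecomposition $A^i\Sigma^2(A^i)^\top$, which decouples the signs across subjects, and then return to the raw cross-covariance with an explicit rank-one/orthogonality argument to re-couple them. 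The paper's path is a step shorter; yours is slightly more elementary (only symmetric eigendecompositions, no SVD uniqueness lemma) and makes the sign-coupling step fully explicit rather than folding it into the factorization.
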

\begin{proof}
The decorrelated SRM model can be written
\begin{align*}
    &\xb^i \sim \Ncal(0, A^i \Sigma {A^i}^{\top} + \rho_i^2 I_v) \enspace \text{with}\enspace  {A^i}^{\top}A^i = I_k
\end{align*}
where $\Sigma$ is a positive diagonal matrix with distincts values ranked in ascending order.

Let us assume the data $\xb^i \enspace i=1, \dots, m$ follow the decorrelated SRM model with parameters $\Sigma, A^i, {\rho_i}^2 \enspace i=1, \dots, m$. Let us further assume that the data $\xb^i \enspace i=1, \dots, m$ follow the decorrelated SRM model with an other set of parameters $\Sigma', A'^i, {\rho'_i}^2 \enspace i=1, \dots, m$.

Since the model is Gaussian, we look at the covariances.
We have for $i \neq j$
\begin{align*}
    \bbE[\xb^i\left(\xb^j\right)^{\top}] = A^i\Sigma {A^j}^{\top} = A'^i \Sigma'{A'^j}^{\top} \enspace, 
\end{align*}
The singular value decomposition is unique up to sign flips and permutation. Since eigenvalues are positive and ranked the only indeterminacies left are on the eigenvectors. For each eigenvalue a sign flip can occur simultaneously on the corresponding left and right eigenvector.

Therefore we have $\Sigma' = \Sigma$, $A^i = A'^i D^{ij}$ and $A^j = A'^j D^{ij}$ where $D^{ij} \in \bbR^{k, k}$ is a diagonal matrix with values in $\{-1, 1\}$. This analysis holds for every $j \neq i$ and therefore $D^{ij} = D$ is the same for all subjects.

We also have for all $i$
\begin{align*}
    \bbE[\xb^i \left(\xb^i\right)^{\top}] = A^i \Sigma {A^i}^{\top} + \rho_i^2 I_v =  A'^i \Sigma' {A'^i}^{\top}  + {\rho'}_i^2 I_v\\
\end{align*}
We therefore conclude ${\rho'}_i^2 = \rho_i^2, i=1 \dots m$.

Note that if the diagonal subject specific noise covariance $\rho_i^2 I_v$ is replaced by any positive definite matrix, the model still enjoys identifiability.
\end{proof}

\section{fMRI experiments}
\label{sec:app_expts}
\subsection{Dataset description and preprocessing}
\label{preprocessing}
The full brain mask used to select brain regions is available in the Python package associated with the paper.

\paragraph{Sherlock}
In \emph{sherlock} dataset, 17 participants are watching "Sherlock" BBC TV show (beginning of episode 1). 
These data are downloaded from \url{http://arks.princeton.edu/ark:/88435/dsp01nz8062179}. 
Data were acquired using a 3T scanner with an isotropic spatial resolution of 3 mm. 
More information including the preprocessing pipeline is available in~\cite{chen2017shared}.
Subject 5 is removed because of missing data leaving us with 16 participants.
Although \emph{sherlock} data are downloaded as a temporal concatenation of two runs, we split it manually into 4 runs of 395 timeframes and one run of 396 timeframes so that we can perform 5 fold cross-validation in our experiments.

\paragraph{FORREST}
In FORREST dataset 20 participants are listening to an audio version of the Forrest Gump  movie.
FORREST data are downloaded from OpenfMRI~\cite{poldrack2013toward}. 
Data were acquired using a 7T scanner with an isotropic spatial resolution of 1~mm (see more details in~\cite{hanke2014high}) and resampled to an isotropic spatial resolution of 3~mm.
More information about the forrest project can be found at \url{http://studyforrest.org}.
Subject 10 is discarded because not all runs available for other subjects were available for subject 10 at the time of writing.
Run 8 is discarded because it is not present in most subjects.
 
\paragraph{RAIDERS}
In RAIDERS dataset, 11 participants are watching the movie "Raiders of the lost ark".
The RAIDERS dataset belongs to the Individual Brain Charting dataset~\cite{ibc}.
Data were acquired using a 3T scanner and resampled to an isotropic spatial resolution of 3~mm.
The RAIDERS dataset reproduces the protocol described in~\cite{haxby2011common}.
Preprocessing details are described in~\cite{ibc}.

\paragraph{CLIPS}
In CLIPS dataset, 12 participants are exposed to short video clips. 
The CLIPS dataset also belongs to the Individual Brain Charting dataset (\cite{ibc}).
Data were acquired using a 3T scanner and resampled to an isotropic spatial resolution of 3~mm.
It reproduces the protocol of original studies described in \cite{nishimoto2011reconstructing} and \cite{huth2012continuous}.
Preprocessing details are described in~\cite{ibc}.

At the time of writing, the CLIPS and RAIDERS dataset from the individual brain charting dataset \url{https://project.inria.fr/IBC/} are available at \url{https://openneuro.org/datasets/ds002685}.
Protocols on the visual stimuli presented are available in a dedicated repository on Github: \url{https://github.com/hbp-brain-charting/public_protocols}.

\subsection{Reconstructing the BOLD signal of missing subjects: Discussion on ROIs choice}
\label{brainmaps}

The quality of the reconstructed BOLD signal varies depending on the choice of the region of interest. In Figure~\ref{fig:brainmaps}, we plot for GroupICA, SRM and MultiViewICA, the R2 score per voxel using 50 components for datasets \emph{sherlock}, \emph{forrest}, \emph{raiders} and \emph{clips}. As could be anticipated from the task definition, \emph{forrest} obtains high reconstruction accuracy in the auditory cortices, while \emph{clips} shows good reconstruction in the visual cortex (occipital lobe mostly); the richer \emph{sherlock} and \emph{raiders} datasets yield good reconstructions in both domains, but also in other systems (language, motor).
We also see visually see that data reconstructed by MultiViewICA are
a better approximation of the original data than other methods.
This is particularly obvious for the \emph{clips} datasets where it is
clear that voxels in the posterior part of the superior
temporal sulcus are better recovered by MultiViewICA than by SRM or
GroupICA.

In order to determine the ROIs, we focus on the R2 score per voxel between the BOLD signal reconstructed by GroupICA and the actual bold signal. We run GroupICA with $10, 20$ and $50$ components and select the voxels that obtained a positive R2 score for all sets of components.
We discard voxels with an R2 score above 80\% as they visually correspond to artefacts and apply a binary opening using a unit cube as the structuring element. The chosen regions are plotted in figure~\ref{fig:roi}.

\begin{figure}
  \centering
  \includegraphics[width=\textwidth]{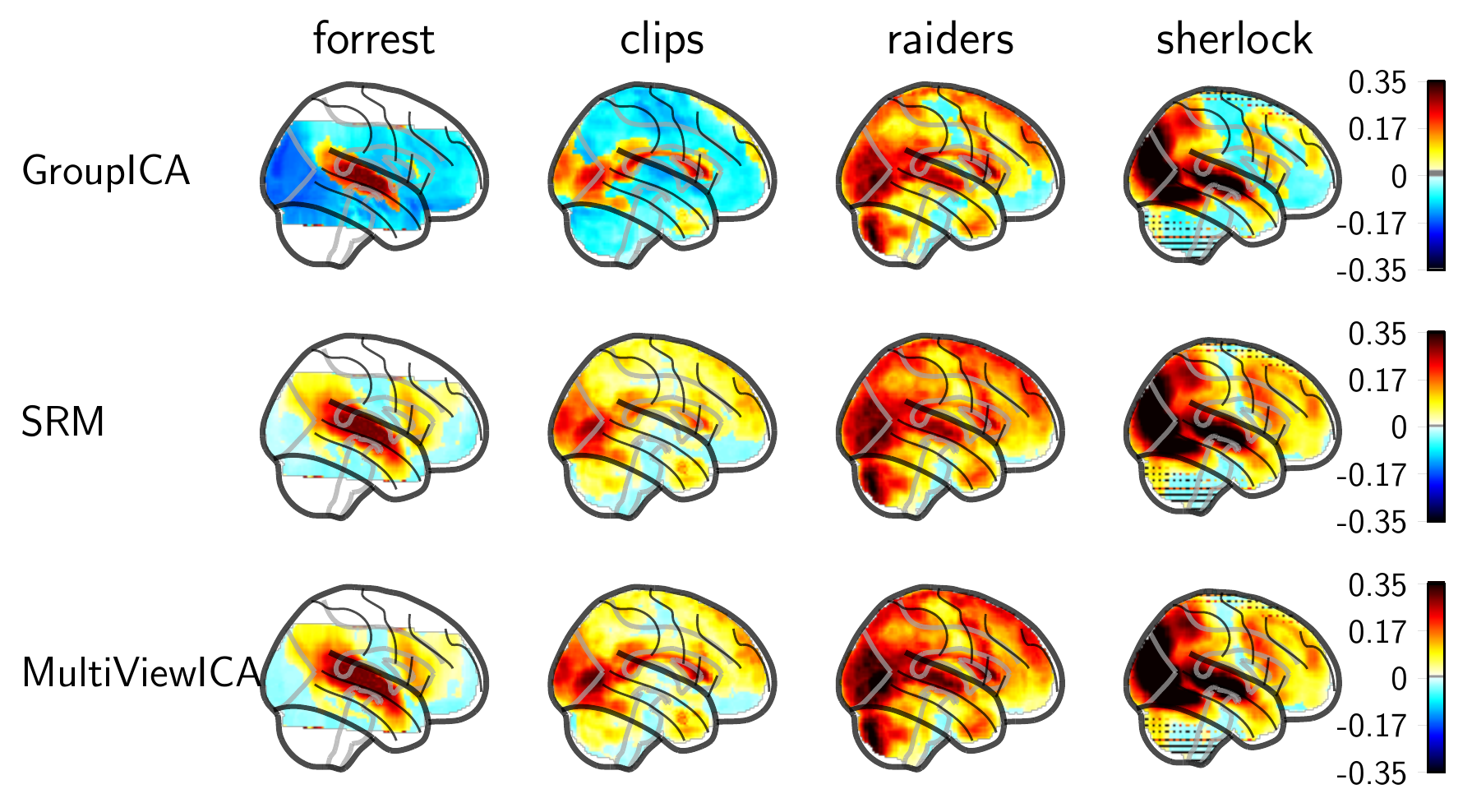}
  \caption{\textbf{Reconstructing the BOLD signal of missing subjects: Reconstruction R2 score per voxel} We plot for GroupICA, SRM and MultiViewICA, the R2 score per voxel using 50 components for datasets \emph{sherlock}, \emph{forrest}, \emph{raiders} and \emph{clips}. We visually see that data reconstructed by MultiViewICA are more faithful reproduction of the original data than other methods.}
  \label{fig:brainmaps}
\end{figure}

\begin{figure}
  \centering
  \includegraphics[width=\textwidth]{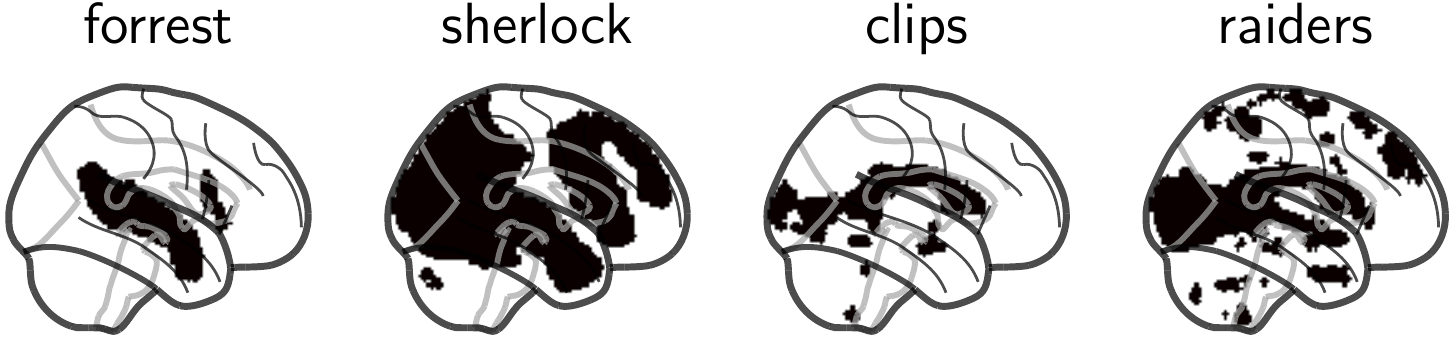}
  \caption{\textbf{Data-driven choice of ROI} Chosen ROIs for the experiment: Reconstructing the BOLD signal of missing subjects.}
  \label{fig:roi}
\end{figure}

\subsection{Between-runs time-segment matching}
\label{app_spatialmaps}

\begin{figure}
  \centering
  \includegraphics[width=\textwidth]{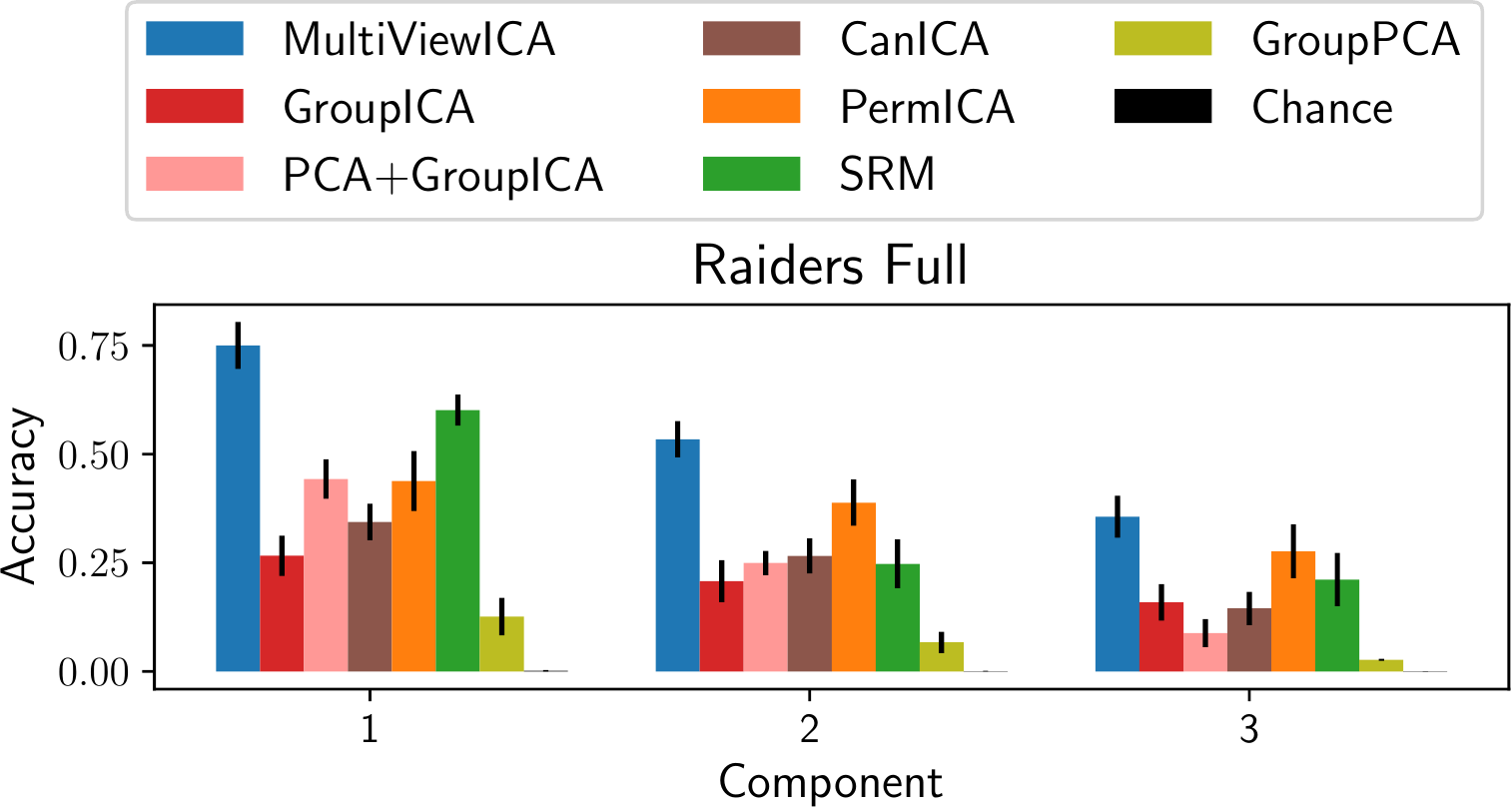}
  \caption{\textbf{Between runs time-segment matching}. Interesting sources correlates more when they correspond to the same stimulus (same scenes of the movie) than when they correspond to distinct stimuli (different scenes).
  We extract 20 sources and report the mean accuracy of the 3 best performing sources}
  \label{fig:swetha}
\end{figure}

We measure the ability of each algorithm to extract meaningful shared sources that correlate more when they correspond to the same stimulus than when they correspond to distinct stimuli. We use the \emph{raiders-full} dataset, which allows this kind of analysis because subjects watch some selected scenes from the movie twice, during the first two runs (1 and 2) and the last two (11 and 12).
First, the forward operators are learned by fitting each algorithm with 20 components on the data of all 11 subjects using all 12 runs. We then select a subset of 8 subjects and the shared sources are computed by applying the forward operators and averaging.
We select a large target time-segment ($50$
timeframes) taken at random from run 1 and 2, and we try to localize the corresponding sample time-segment from the 10 last runs using a single component of the shared sources.
The time-segment is said to be
correctly classified if the correlation between the target and corresponding sample
time-segment is higher than with any other time-segment (partially overlapping windows are excluded).
In contrast to the \emph{between subject time-segment matching} experiment, we obtain one accuracy score per component.
We repeat the experiment 10 times with different subsets of subjects randomly chosen and report the mean accuracy of the 3 best performing components in Figure~\ref{fig:swetha}. Error bars correspond to a 95~\% confidence interval.
MultiView ICA achieves the highest accuracy.

We then focus on the 3 best performing components of MultiView ICA. For each component, we plot in Figure~\ref{fig:app_spatialmaps} (left) the shared sources during two sets of runs where subjects were exposed to the same scenes of the movie. We then study the localisation of these sources.
We average the forward operators across subjects and plot the columns corresponding to the components of interest in Figure~\ref{fig:app_spatialmaps} (right).
As each column is seen as a set of weights over all voxels, it represents a spatial map.

The component 1 of the shared responses follows almost the same pattern in the two set of runs corresponding to the same scenes of the movie. The spatial map corresponding to component 1 highlights the language network.
In component 2, the temporal patterns during the viewing of identical scenes are also very similar. The corresponding spatial map highlights the visual network especially the visual dorsal pathway.
In component 3, there exists a similarity however less striking than with the two previous components. The corresponding spatial map highlights a contrast between the spatial attention network and the auditory network.

\begin{figure}
  \centering
  \includegraphics[width=\textwidth]{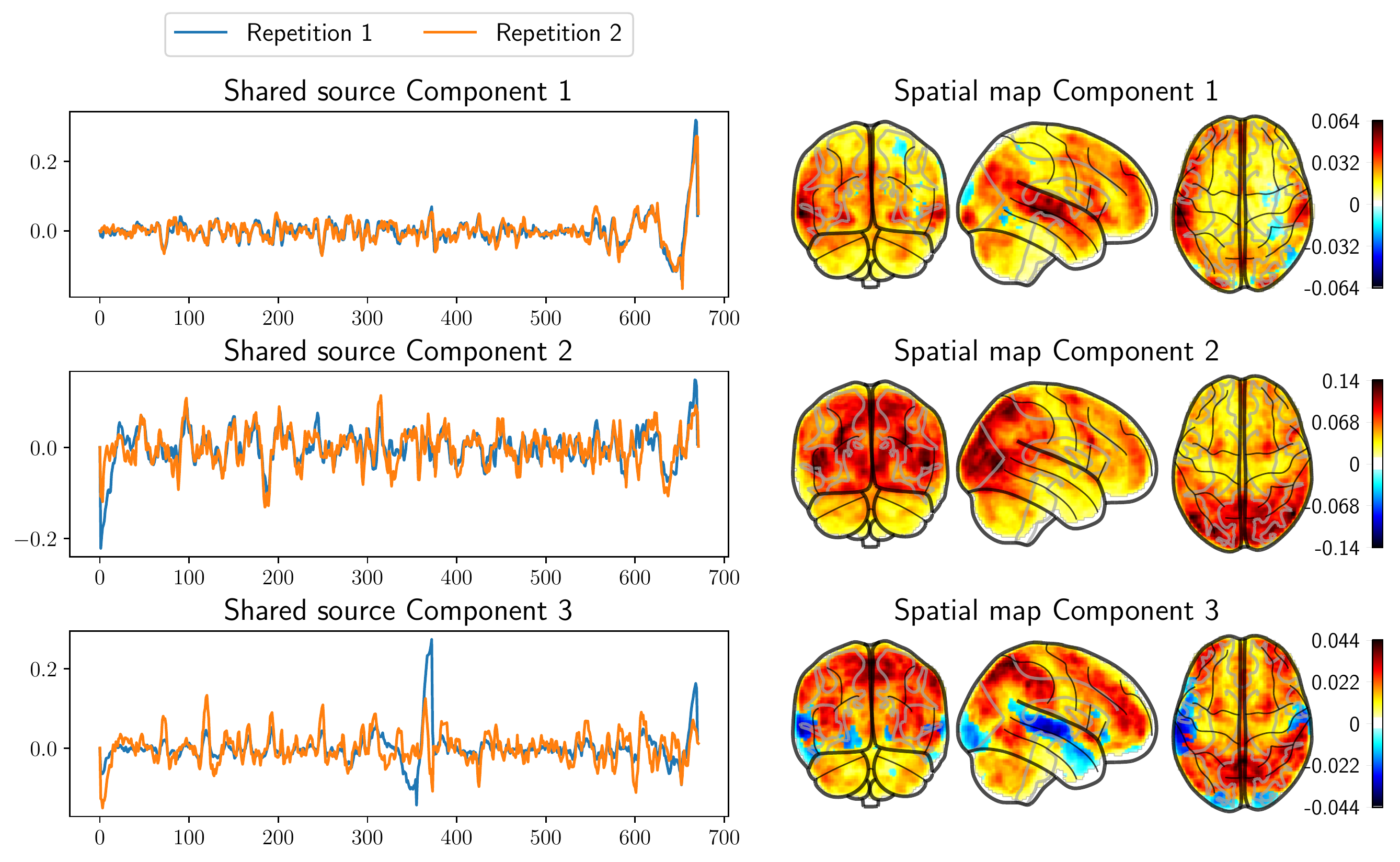}
  \caption{\textbf{Between-runs time segment matching: spatial maps and timecourses} \emph{Left:} Timecourses of the 3 shared sources yielding the highest accuracy. The two displayed set of runs correspond to the same scenes in the movie. \emph{Right:} Localisation of the same shared sources in the brain}
  \label{fig:app_spatialmaps}
\end{figure}

\subsection{Reproducing time-segment matching experiment}
\label{appendix_reproduce}
We reproduce the time-segment matching experiments described in \cite{chen2016convolutional} and \cite{zhang2016searchlight} and use two fold classification over runs instead of 5-fold as we have done in the main paper. We used the sherlock data available at \url{http://arks.princeton.edu/ark:/88435/dsp01nz8062179} and the full brain mask provided in the Python package associated with the paper. We applied high-pass filtering (140 s cutoff) and the time series of each voxel were normalized to zero mean and unit variance.

The results are available in Figure~\ref{fig:supp_timesegment}.

\begin{figure}[H]
  \centering
  \includegraphics[width=0.6\textwidth]{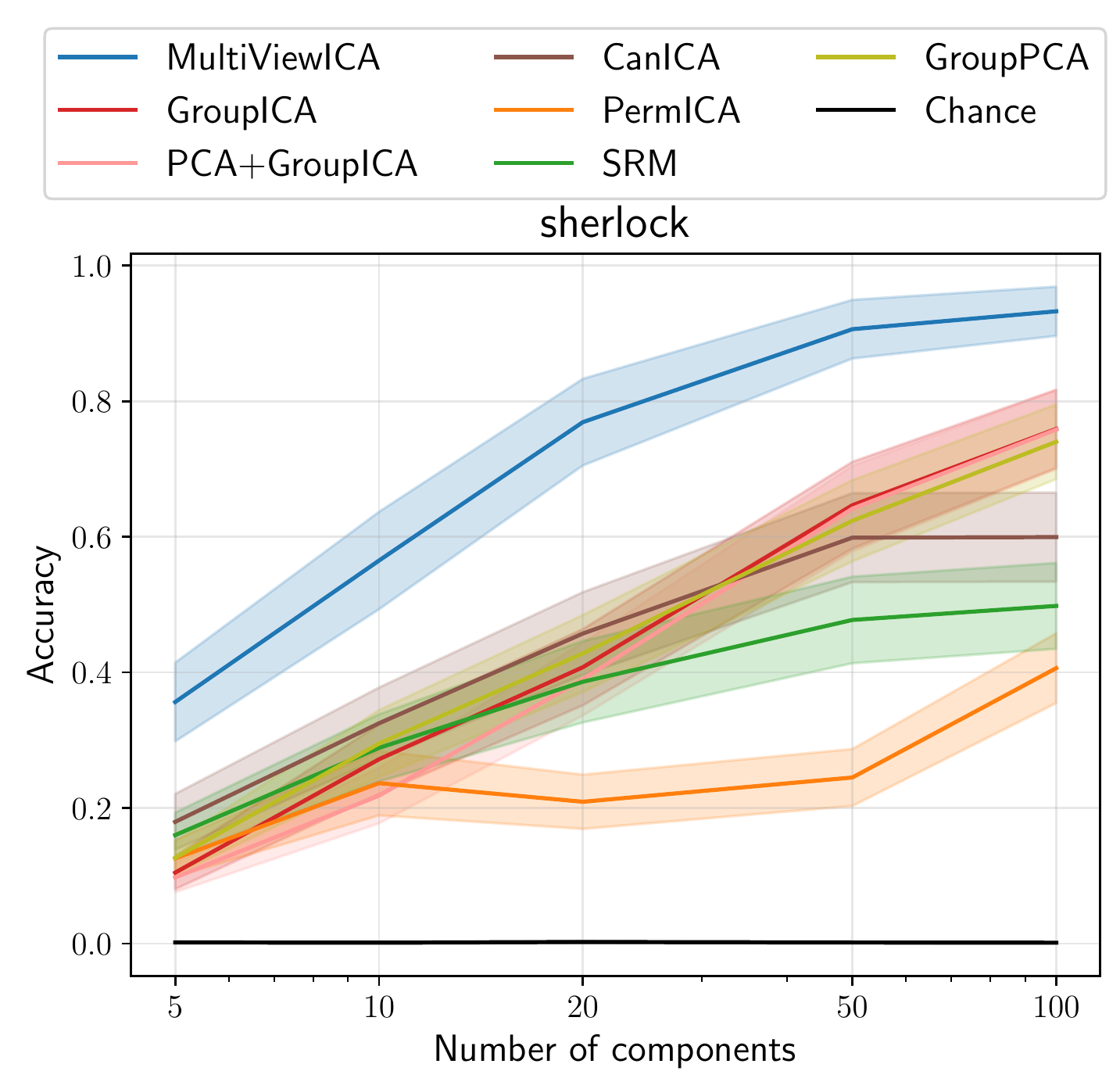}
  \caption{\textbf{Reproducing the time-segment matching experiment of \cite{chen2016convolutional}~\cite{zhang2016searchlight}} Mean classification accuracy - error bars represent 95\% confidence interval}
  \label{fig:supp_timesegment}
\end{figure}

\subsection{Impact of the hyperparameter $\sigma$ }
\label{sec:app_sigma_impact}
On top of the theoretical guarantees about the robustness of our method to the choice of the $\sigma$ parameter, we investigate its practical impact on the time-matching segment experiment, on the Sherlock dataset with $10$ components.
We compute the accuracy of the multi-view ICA pipeline with different choice of $\sigma$.
This is reported in Fig.~\ref{fig:supp_noise_sensitivity}. 
The accuracy is constant for a wide range of $\sigma$, only decreasing when $\sigma$ attains very high values.
\begin{figure}[H]
  \centering
  \includegraphics[width=0.6\textwidth]{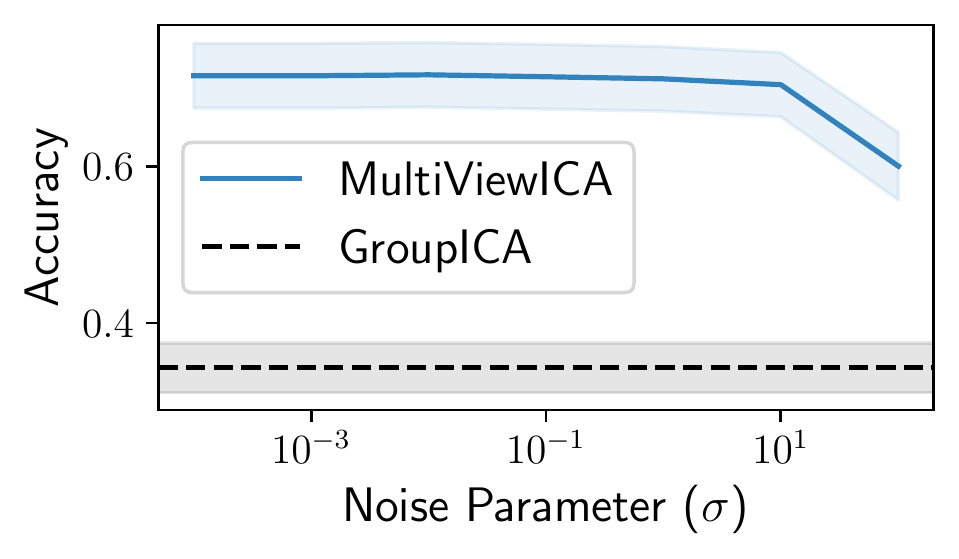}
  \caption{\textbf{Effect of the parameter $\sigma$}: We compute the accuracy of the multiview-ICA pipeline on the time-segment matching experiment for various values of the $\sigma$ hyperparameter over a grid. The accuracy varies only marginally with $\sigma$.}
  \label{fig:supp_noise_sensitivity}
\end{figure}

\section{Related Work}
\label{sec:app_rel_work}

The following table describes some usual method for extracting shared sources from multiple subjects datasets.
The column "Modality/Source" describes the type of data for which each algorithm was \emph{initially} proposed, even though each algorithm could be applied on any type of data. 
The source type can be either temporal if extracted sources are time courses or spatial if they are spatial patterns. 
\begin{center}
\begin{longtable}{ |p{.15\textwidth} | p{.2\textwidth} |p{.2\textwidth}| p{.3\textwidth}|}
\hline
\textbf{Method} & \textbf{Modality/Source} &\textbf{Dimension reduction} & \textbf{Description}  \\
\hline
SRM \cite{chen2015reduced} & 
fMRI/Temporal
&
SRM
&
The model is $\xb^i = A^i\sbb + \nb^i$, with \emph{Gaussian} sources and \emph{orthogonal} mixing matrices $A^i$\\
\hline
GroupPCA~\cite{smith2014group} &
fMRI/Spatial
&
GroupPCA
&
A memory efficient implementation of PCA applied on temporally concatenated data.\\
\hline
GIFT~ \cite{calhoun2001method} & 
fMRI/Spatial
&
Individual PCA + Group PCA (on component-wise concatenated data)
&
Single-subject ICA is applied on the aggregated data\\
\hline
EEGIFT~ \cite{eichele2011eegift} & 
EEG/Temporal
&
Individual PCA + Group PCA (on component-wise concatenated data)
&
Single-subject ICA is applied on the aggregated data\\
\hline
PermICA &
Any
&
Any
&
Single-subject ICA is applied on each subject's data, and the components are matched using the Hungarian algorithm\\
\hline
Clustering approach~\cite{esposito2005independent}&
fMRI/Spatial
&
Individual PCA
&
Single-subject ICA is applied on each subject's data, and the components are matched using a hierarchical clustering algorithm.\\
\hline
Measure projection analysis~\cite{bigdely2013measure}&
EEG/Temporal
&
Individual PCA
&
Single-subject ICA is applied on each subject's data, and the components are matched using a hierarchical clustering algorithm.\\
\hline
TensorICA \cite{beckmann2005tensorial} &
fMRI/Spatial
&
Group PCA (on spatially concatenated data)
&
TensorICA incorporates ICA assumptions into the PARAFAC model. The mixing matrices $A_1 \cdots A_n$ are such that $A_i = A D_i$ where $A$ is common to all subjects and $D_i$ are subject specific diagonal matrices.\\  
\hline
Unifying Approach of \cite{guo2008unified} &
fMRI/Spatial
&
Group PCA (on spatially concatenated data) + GroupPCA (on component-wise concatenated data).
&
The model is $\xb^i = A^i\sbb + \nb^i$ with a Gaussian mixture model on independent sources and a matrix normal prior on the noise. \\
\hline
SR-ICA \cite{zhang2016searchlight} &
fMRI/Temporal
&
SR-ICA
&
SR-ICA incorporates ICA assumptions into the shared response model.  \\
\hline
CAE-SRM \cite{chen2016convolutional}
&
fMRI/Temporal
&
CAE-SRM
&
A convolutional auto-encoder is used to perform the unmixing. \\  
\hline
CanICA \cite{varoquaux2009canica} &
fMRI/Spatial
&
Individual PCA + multi set CCA (on component-wise concatenated data)
&
CanICA applies single-subject ICA on data reduced with PCA and CCA.
 \\  
\hline
Spatial ConcatICA~\cite{svensen2002ica} &
fMRI/Spatial
&
Group PCA (on spatially concatenated data)
&
ICA is applied on spatially concatenated data. The mixing is constrained to be the same across all subjects.
 \\  
 \hline
Temporal ConcatICA~\cite{cong2013validating} &
EEG/Temporal
&
Group PCA (on temporally concatenated data)
&
ICA is applied on temporaly concatenated data. The mixing is constrained to be the same across all subjects.
 \\  
\hline
coroICA \cite{pfister2019robustifying} &
Any
&
Any
&
The model is  $\xb^i = A\sbb_i + \nb^i$. The mixing is constrained to be the same across all subjects. \\
\hline
\end{longtable}
\end{center}
\vspace{-10mm}
An additional related model is described in~\cite{gresele2019incomplete}. Similarly to our work, the ICA model has noise on the source side. However, the model involves nonlinear mixings, which are computationally unfeasible to optimize via maximum likelihood; a contrastive learning scheme is therefore adopted, and the likelihood is not derived in closed form. No evaluation on neuroimaging datasets is presented.

\section{Detailed Cam-CAN sources}
\label{sec:app_montages}
We display each of the 11 shared sources found by Multiview ICA on the Cam-CAN. The time-courses are on the left, the corresponding brain maps are on the right.

{\centering
\includegraphics[width=0.3\textwidth]{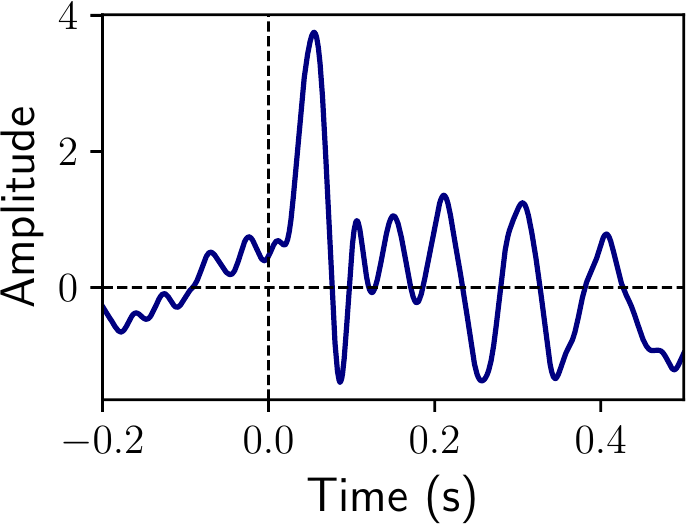}%
\raisebox{0.2\height}{\includegraphics[width=0.68\textwidth]{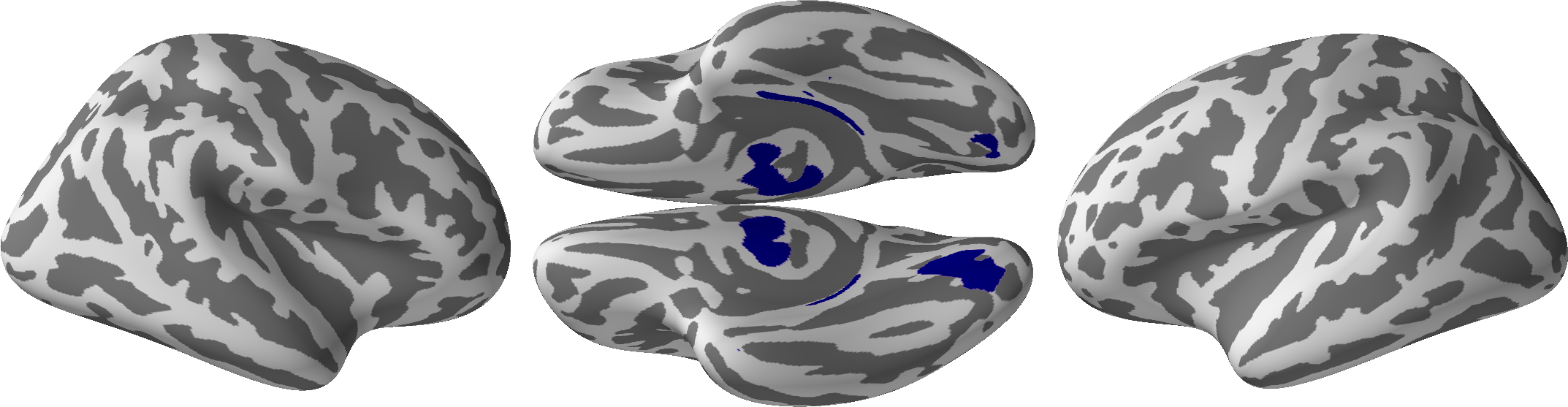}} \\
\includegraphics[width=0.3\textwidth]{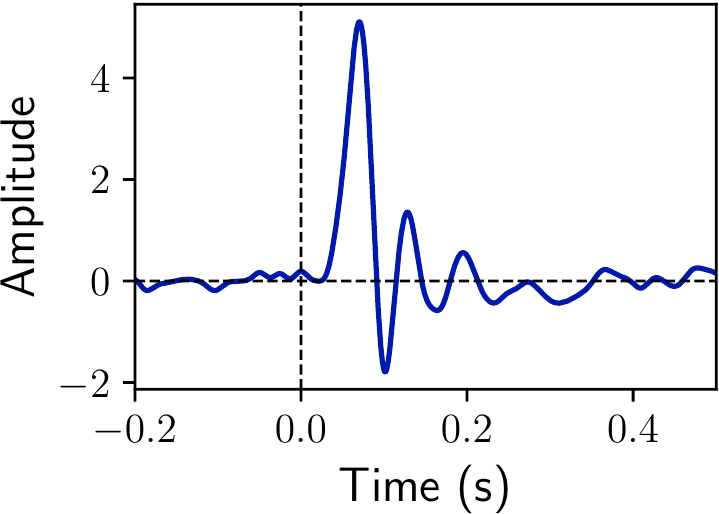}%
\raisebox{0.2\height}{\includegraphics[width=0.68\textwidth]{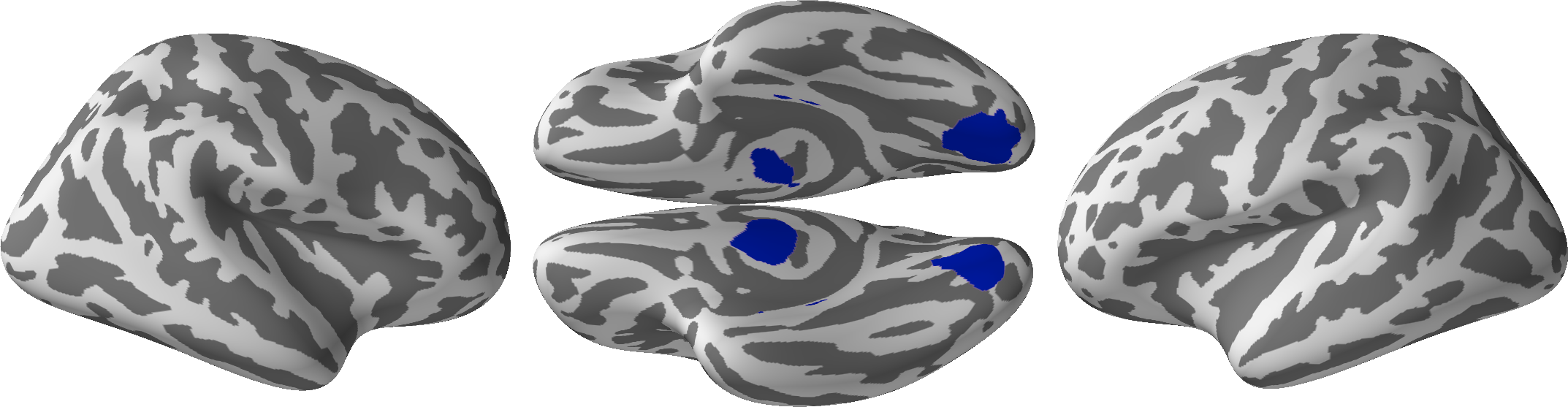}} \\
\includegraphics[width=0.3\textwidth]{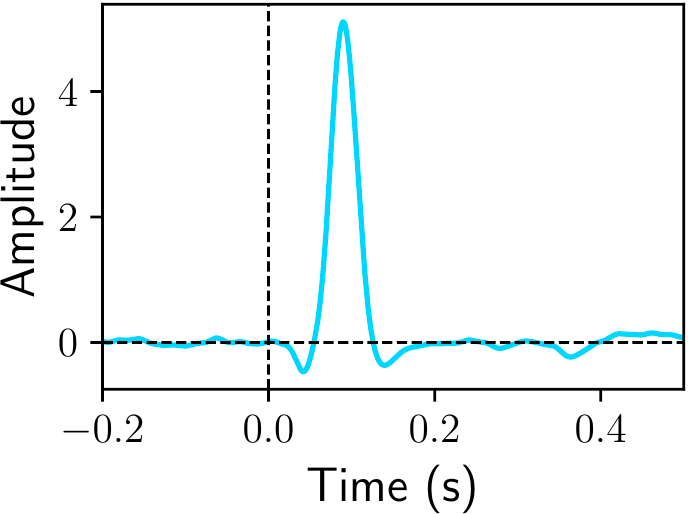}%
\raisebox{0.2\height}{\includegraphics[width=0.68\textwidth]{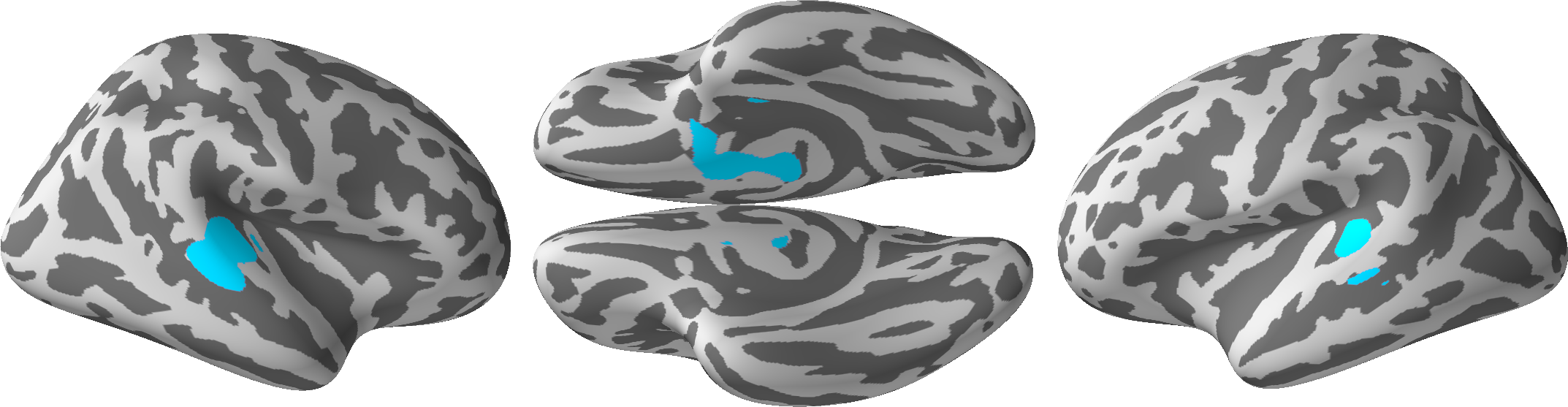}} \\
\includegraphics[width=0.3\textwidth]{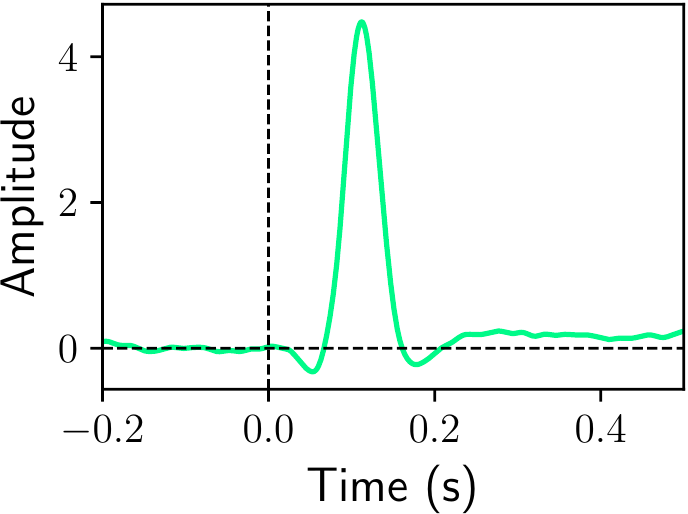}%
\raisebox{0.2\height}{\includegraphics[width=0.68\textwidth]{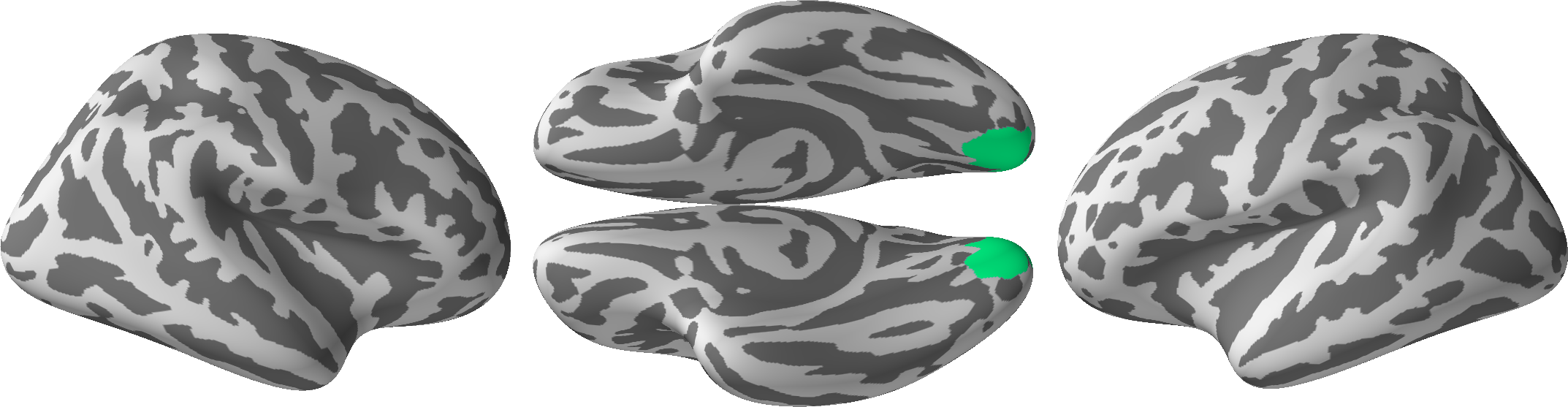}} \\
\includegraphics[width=0.3\textwidth]{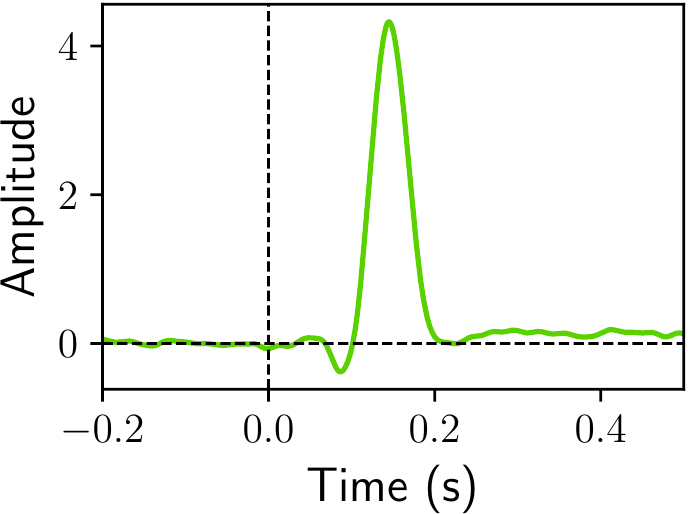}%
\raisebox{0.2\height}{\includegraphics[width=0.68\textwidth]{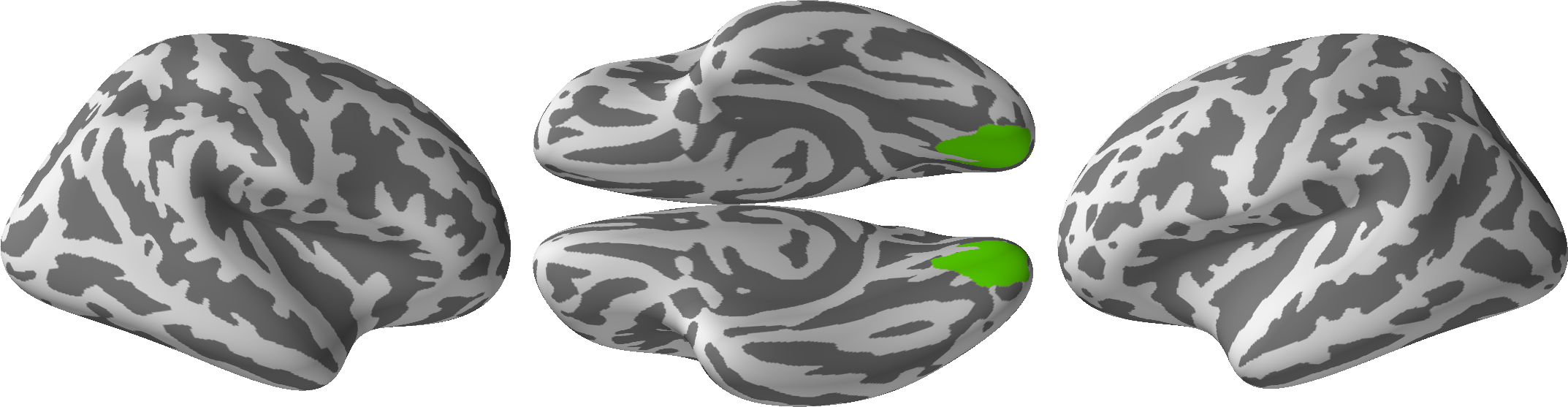}} \\
\includegraphics[width=0.3\textwidth]{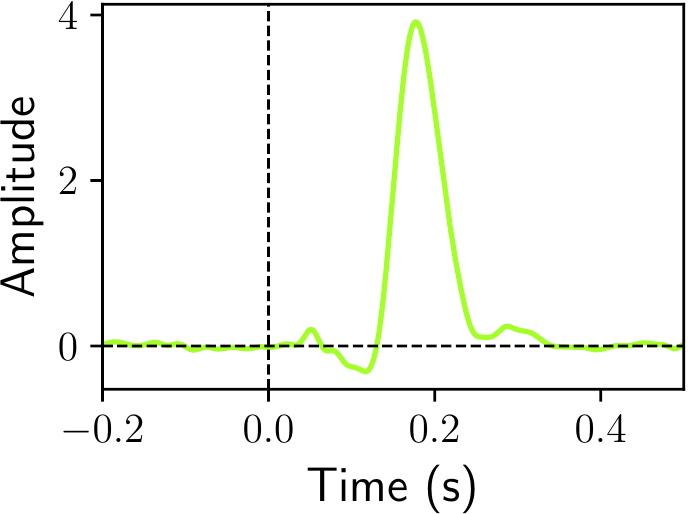}%
\raisebox{0.2\height}{\includegraphics[width=0.68\textwidth]{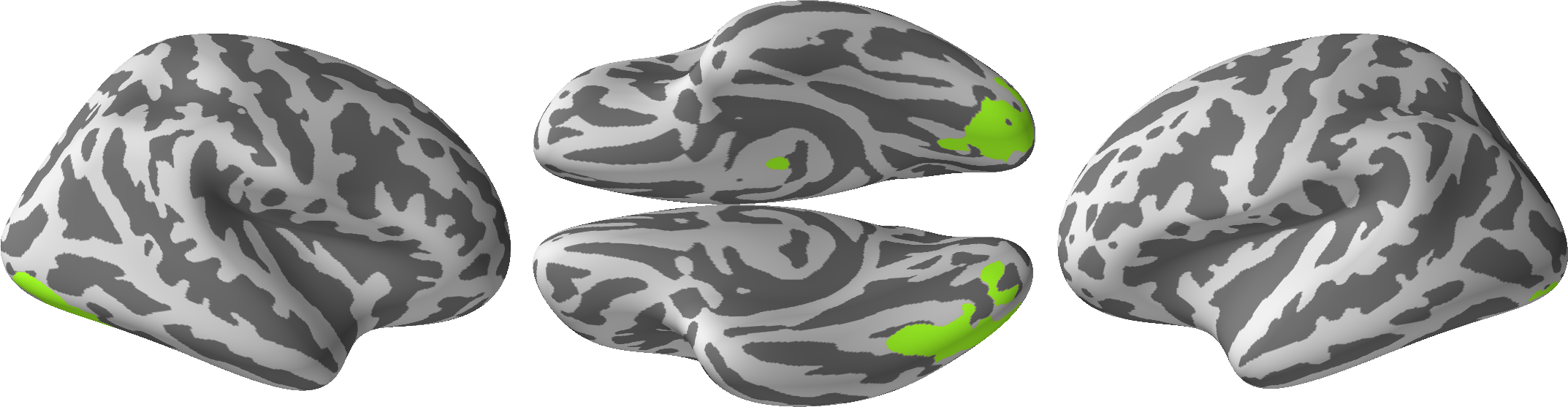}} \\
\includegraphics[width=0.3\textwidth]{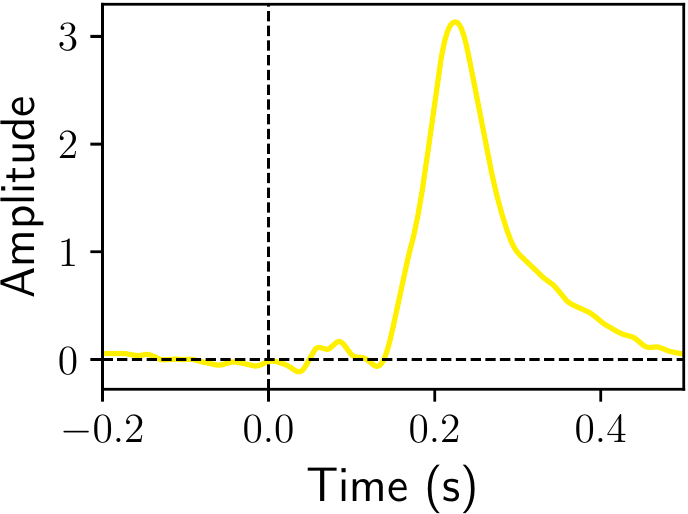}%
\raisebox{0.2\height}{\includegraphics[width=0.68\textwidth]{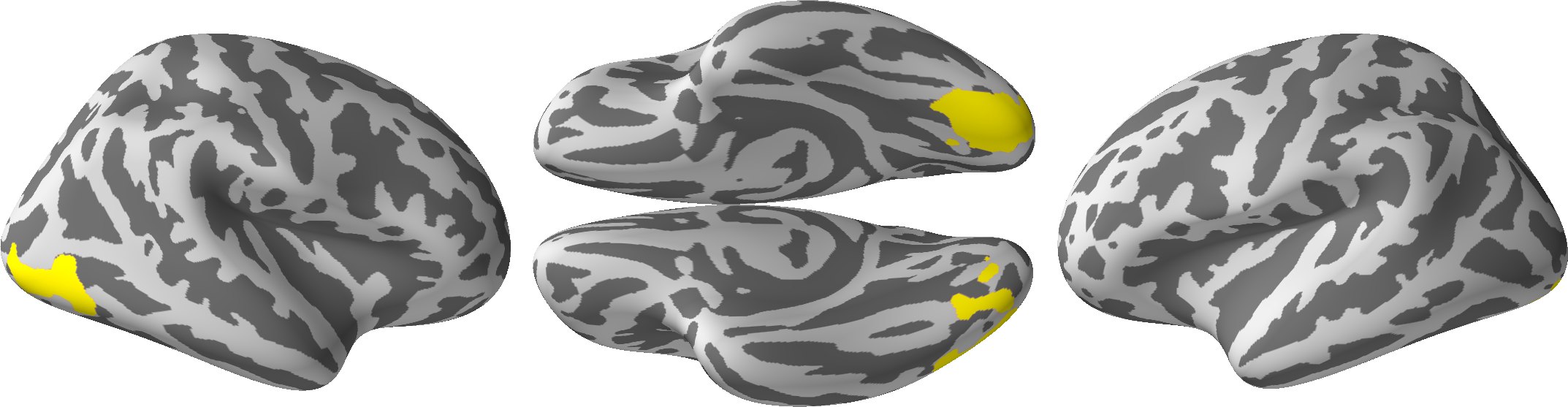}} \\
\includegraphics[width=0.3\textwidth]{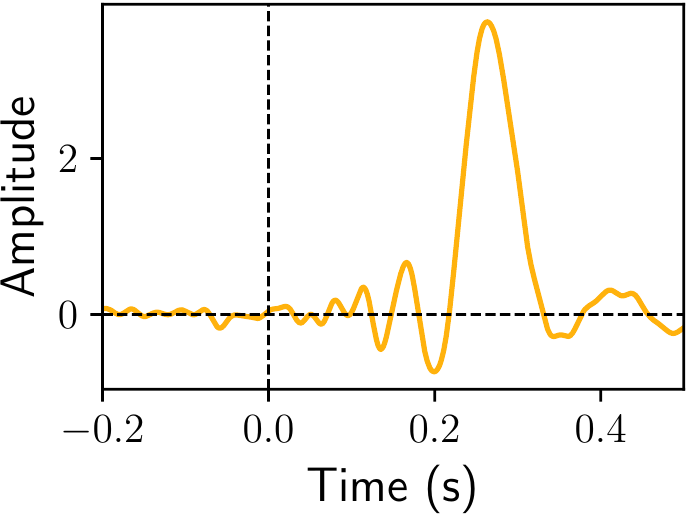}%
\raisebox{0.2\height}{\includegraphics[width=0.68\textwidth]{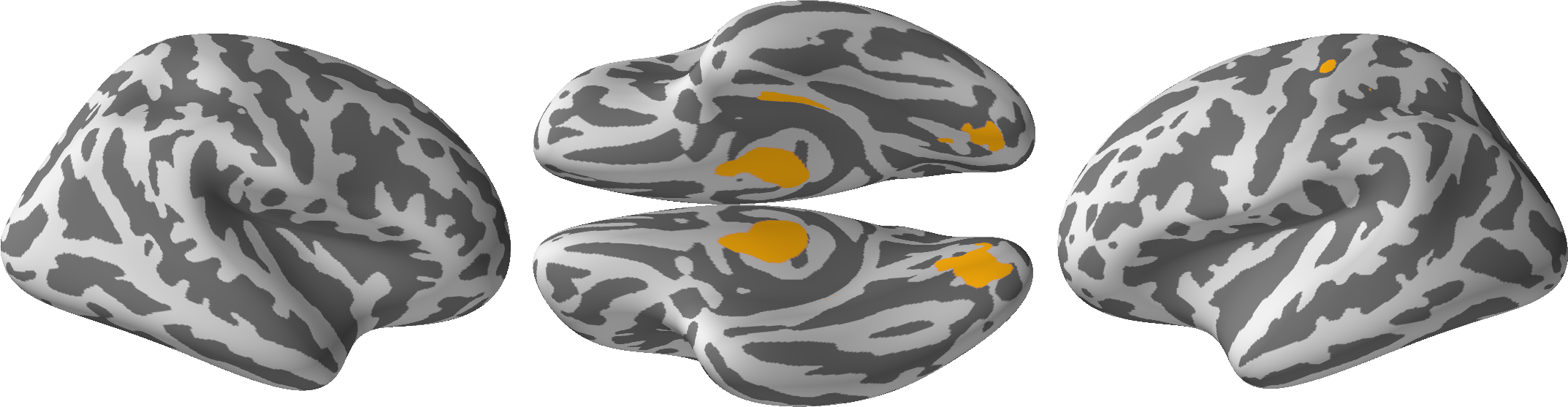}} \\
\includegraphics[width=0.3\textwidth]{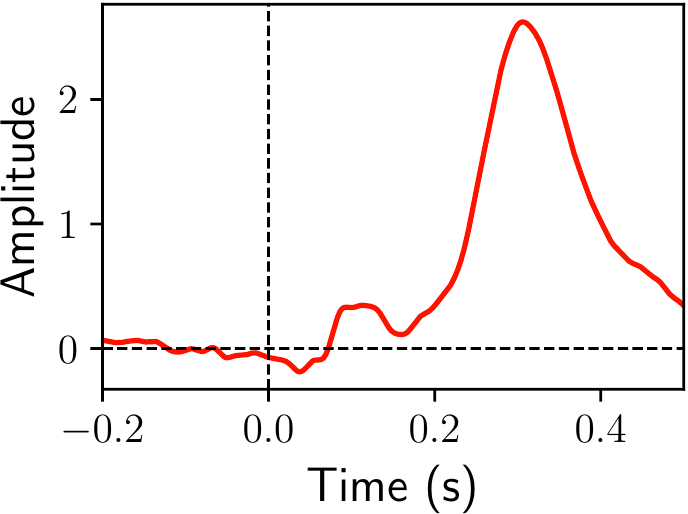}%
\raisebox{0.2\height}{\includegraphics[width=0.68\textwidth]{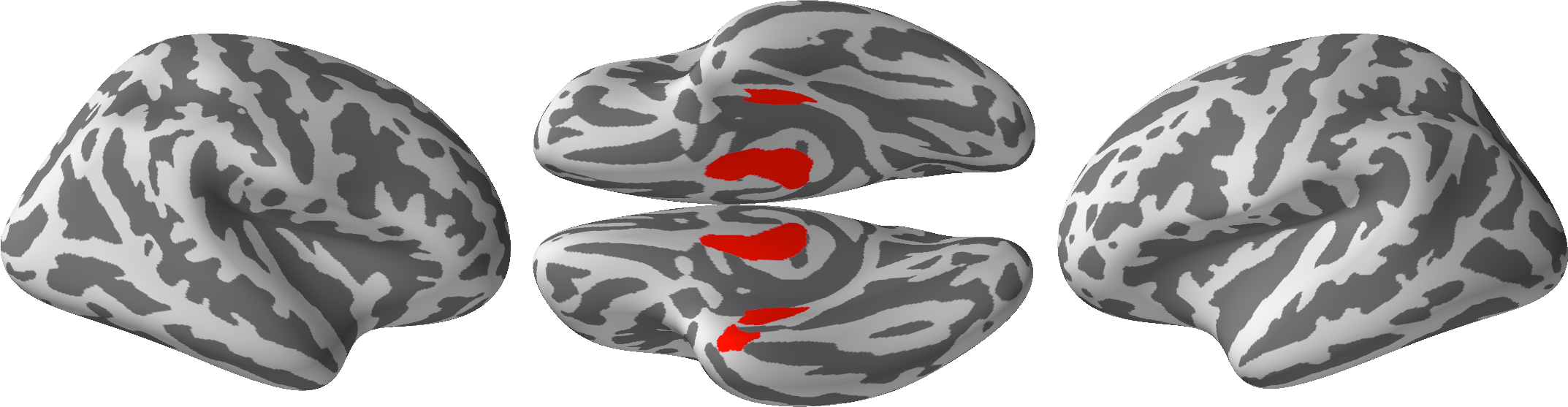}} \\
\includegraphics[width=0.3\textwidth]{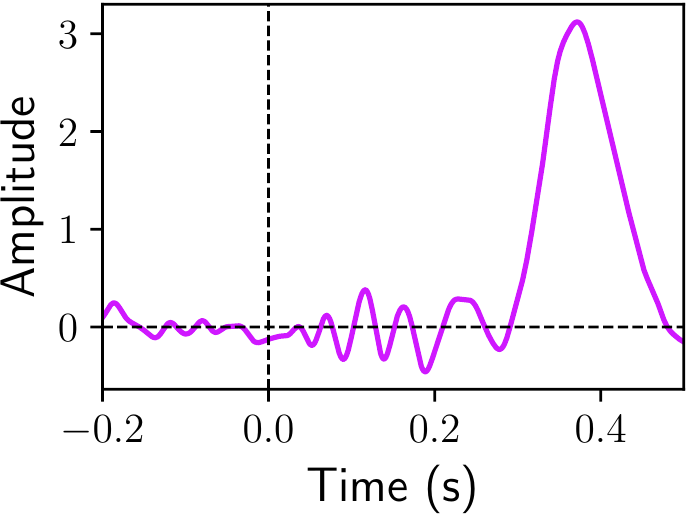}%
\raisebox{0.2\height}{\includegraphics[width=0.68\textwidth]{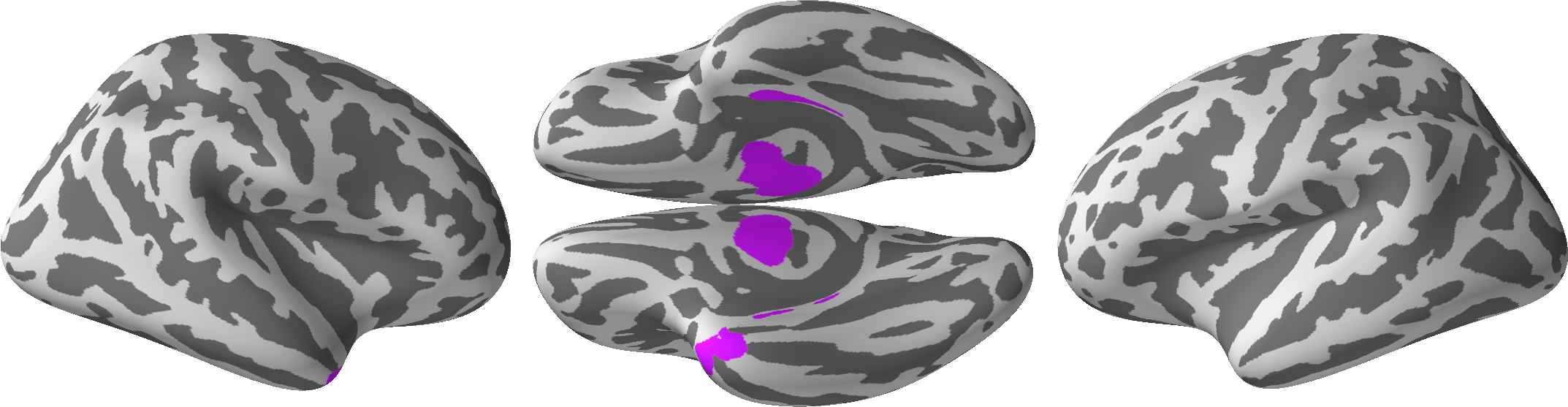}} \\
\includegraphics[width=0.3\textwidth]{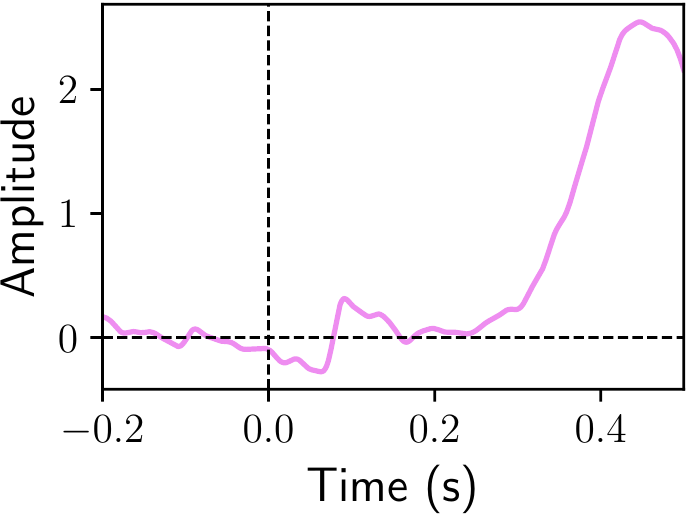}%
\raisebox{0.2\height}{\includegraphics[width=0.68\textwidth]{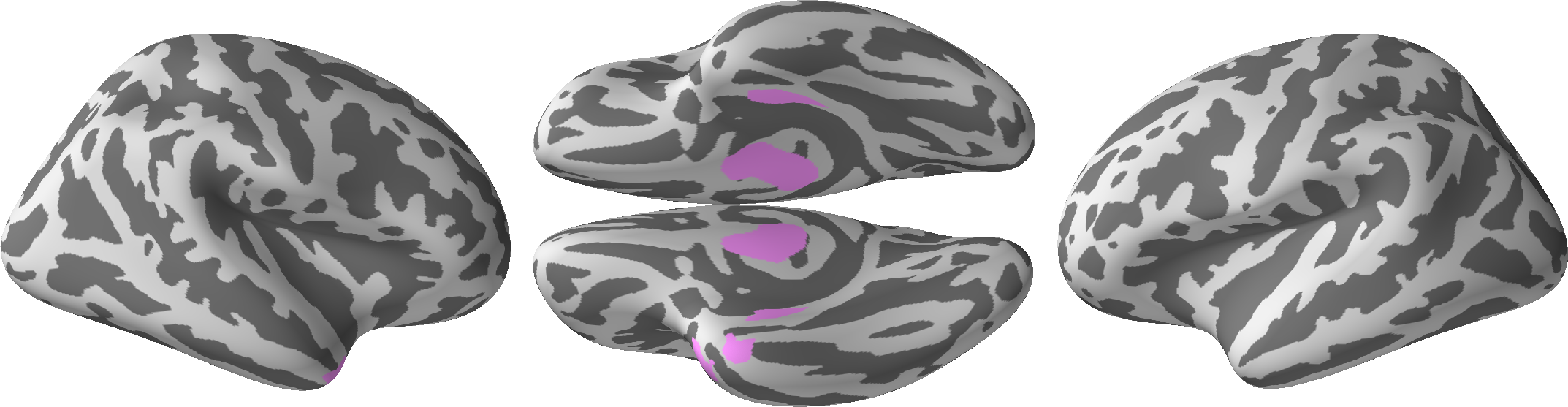}} \\
}

\section{Average forward operators on fMRI datasets}
\label{sec:spatial_maps}
We display the average forward operator across subjects on the Raiders, Forrest, Clips and Sherlock datasets obtained with MultiViewICA and GroupICA with 5 components. A 5~mm spatial smoothing was applied on all datasets, and the confound signals corresponding to the 5 components with the highest variance were removed before applying MultiViewICA or GroupICA.

{\centering

  \includegraphics[width=\textwidth]{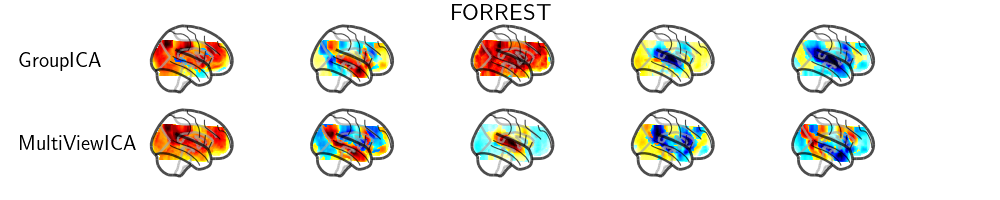} \\
  \includegraphics[width=\textwidth]{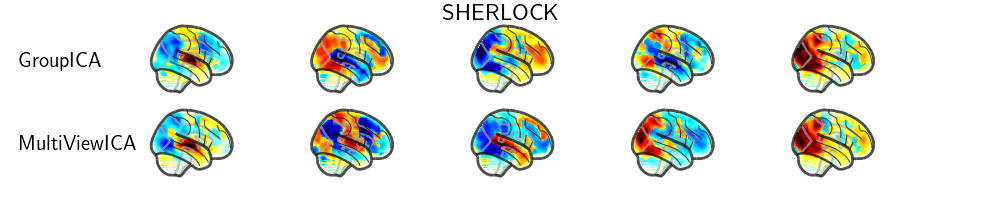} \\
  \includegraphics[width=\textwidth]{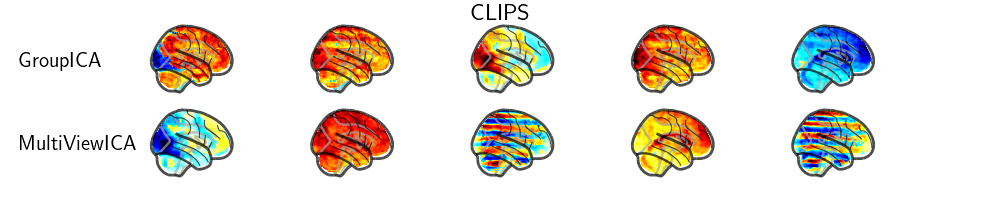} \\
  \includegraphics[width=\textwidth]{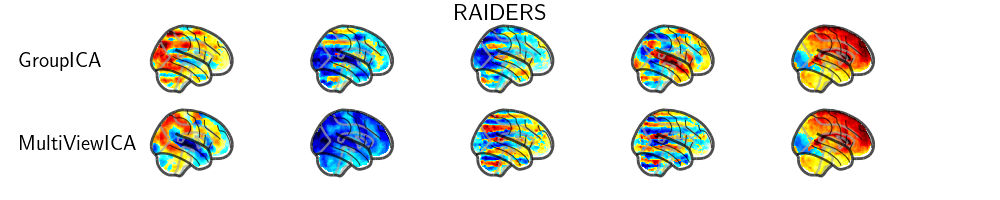} \\
}

\section{Synthetic benchmark using the model $\xb^i = A^i\sbb + \nb^i$}
\label{app:complex_cov}
We generate data according to the model $\xb^i = A^i\sbb + \nb^i$, where $\xb^i \in \mathbb{R}^{50}$, $\sbb \in \mathbb{R}^{20}$, and $\nb^i\sim \mathcal{N}(0, \sigma^2 I_{50})$. After applying individual PCA to obtain signals of dimension $20$, we apply the different ICA algorithms and report the reconstruction error in fig.~\ref{fig:reconstruction_synth}.

\begin{figure}[H]
  \center
  \includegraphics[width=0.5\linewidth]{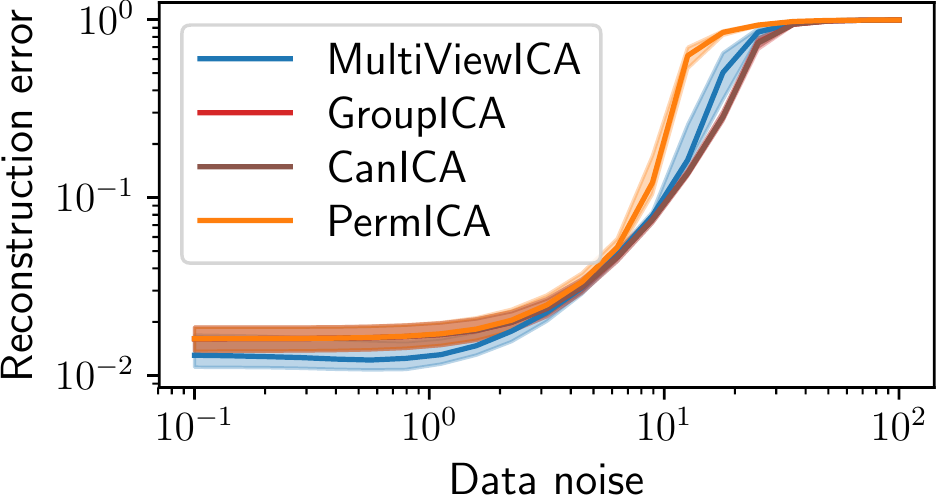}
  \caption{Synthetic experiment with model $\xb^i = A^i\sbb^i + \nb^i$} 
  \label{fig:reconstruction_synth}
\end{figure}

\section{Summary of our quantitative results}
\label{sec:app_real_data}
Our quantitative results for the fMRI experiments of time-segment matching and BOLD signal reconstruction and on for the MEG phantom data experiment are summarized, respectively, in Table~\ref{tab:timeseg}, Table~\ref{tab:recon} and Table~\ref{tab:meg}. All methods are compared upon extraction of sources with the same dimensionality ($20$ components).

\begin{table}
    \centering
    \begin{tabular}{|c|c | c | c|}
            \hline
         \textbf{Dataset} & \textbf{Method} & \textbf{Accuracy} & \textbf{Confidence interval} \\
         \hline
clips   & Chance      & 0.002&[0.001, 0.003] \\
        & CanICA    & 0.130&[0.112, 0.147] \\
        & PCA + GroupICA      & 0.124&[0.109, 0.139] \\
        & GroupICA    & 0.152&[0.133, 0.171] \\

        & PermICA     & 0.147&[0.126, 0.169] \\
        & SRM         & 0.115&[0.104, 0.126] \\
        & MultiViewICA& \textbf{0.167}&[0.142, 0.192] \\
        \hline
forrest & Chance      & 0.002&[0.001, 0.002] \\
        & CanICA    & 0.192&[0.170, 0.214] \\
        & PCA + GroupICA      & 0.088&[0.077, 0.098] \\
        & GroupICA    & 0.154&[0.137, 0.170] \\
        & PermICA     & 0.135&[0.118, 0.152] \\
        & SRM         & 0.188&[0.173, 0.203] \\
        & MultiViewICA& \textbf{0.448}&[0.411, 0.484] \\
        \hline
raiders & Chance      & 0.002&[0.001, 0.003] \\
        & CanICA    & 0.256&[0.220, 0.291] \\
        & PCA + GroupICA      & 0.331&[0.289, 0.372] \\
        & GroupICA    & 0.321&[0.281, 0.361] \\
        & PermICA     & 0.381&[0.341, 0.421] \\
        & SRM         & 0.265&[0.240, 0.289] \\
         & MultiViewICA& \textbf{0.408}&[0.358, 0.458] \\
         \hline
sherlock& Chance      & 0.005&[0.003, 0.006] \\
        & CanICA    & 0.607&[0.567, 0.648] \\
        & PCA + GroupICA      & 0.454&[0.416, 0.492] \\
        & GroupICA    & 0.519&[0.481, 0.556] \\
        & PermICA     & 0.399&[0.365, 0.434] \\
        & SRM         & 0.493&[0.465, 0.520] \\
        & MultiViewICA& \textbf{0.873}&[0.844, 0.903] \\
\hline
    \end{tabular}
    \caption{Timesegment matching: Summary of our quantitative results. We report the mean accuracy across cross-validation splits.}
    \label{tab:timeseg}
\end{table}

\begin{table}
    \centering
    \begin{tabular}{|c|c | c | c|}
            \hline
         \textbf{Dataset} & \textbf{Method} & \textbf{R2 score} & \textbf{Confidence interval} \\
         \hline
         clips   & Chance              & 0.000&[0.000 ,0.000] \\
        & CanICA            &  0.110&[ 0.097 , 0.123] \\
        & PCA + GroupICA              &  0.075&[ 0.058 , 0.092] \\
        & GroupICA            &  0.077&[ 0.059 , 0.094] \\
        & PermICA             &  0.099&[ 0.087 , 0.111] \\
        & SRM                 &  0.081&[ 0.069 , 0.094] \\
        & MultiViewICA        &  \textbf{0.114}&[ 0.099 , 0.128] \\
        \hline
forrest & Chance              & 0.000&[0.000 ,0.000] \\
        & CanICA            &  0.181&[ 0.169 , 0.193] \\
        & PCA + GroupICA              &  0.072&[ 0.054 , 0.090] \\
        & GroupICA            &  0.081&[ 0.062 , 0.099] \\
        & PermICA             &  0.098&[ 0.090 , 0.106] \\
        & SRM                 &  0.180&[ 0.168 , 0.193] \\
        & MultiViewICA        &  \textbf{0.191}&[ 0.177 , 0.204] \\
        \hline
raiders & Chance              & 0.000&[0.000 ,0.000] \\
        & CanICA            &  0.136&[ 0.122 , 0.149] \\
        & PCA + GroupICA              &  0.063&[ 0.045 , 0.080] \\
        & GroupICA            &  0.062&[ 0.043 , 0.081] \\
        & PermICA             &  0.107&[ 0.091 , 0.124] \\
        & SRM                 &  0.138&[ 0.121 , 0.154] \\
        & MultiViewICA        &  \textbf{0.144}&[ 0.124 , 0.164] \\
        \hline
sherlock& Chance              & 0.000&[0.000 ,0.000] \\
        & CanICA            &  0.156&[ 0.141 , 0.172] \\
        & PCA + GroupICA              &  0.087&[ 0.065 , 0.108] \\
        & GroupICA            &  0.091&[ 0.070 , 0.112] \\
        & PermICA             &  0.067&[ 0.055 , 0.078] \\
        & SRM                 &  \textbf{0.164}&[ 0.147 , 0.181] \\
        & MultiViewICA        &  0.161&[ 0.142 , 0.180] \\
        \hline

    \end{tabular}
    \caption{Reconstructing the BOLD signal of missing subjects: Summary of our quantitative results. We report the mean R2 score across cross-validation splits.}
    \label{tab:recon}
\end{table}

\begin{table}
    \centering
    \begin{tabular}{|c|c|c|c}
    \hline
         \textbf{Method} & \textbf{Reconstruction error} & \textbf{1st and 3d quartiles} 
         \\
         \hline
         MultiViewICA & \textbf{0.0045} & [0.0039, 0.0052] \\ 
GroupICA & 0.1098 & [0.0549, 0.1734] \\ 
PCA+GroupICA & 0.1111 & [0.0760, 0.1502] \\ 
PermICA & 0.0730 & [0.0423, 0.1037] \\ 
\hline
    \end{tabular}
    \caption{Phantom MEG data: Summary of our quantitative results with 2 epochs. We report the median reconstruction error across cross-validation splits.}
    \label{tab:meg}
\end{table}

\end{document}